\documentclass[sigconf]{acmart}
\AtBeginDocument{%
  }

\copyrightyear{2026}
\acmYear{2026}
\setcopyright{cc}
\setcctype{by}
\acmConference[KDD '26]{Proceedings of the 32nd ACM SIGKDD Conference on Knowledge Discovery and Data Mining V.2}{August 09--13, 2026}{Jeju Island, Republic of Korea}
\acmBooktitle{Proceedings of the 32nd ACM SIGKDD Conference on Knowledge Discovery and Data Mining V.2 (KDD '26), August 09--13, 2026, Jeju Island, Republic of Korea}
\acmDOI{10.1145/3770855.3818013}
\acmISBN{979-8-4007-2259-2/2026/08}

\usepackage{amsmath}

\usepackage{amssymb}
\usepackage{mathtools}
\usepackage{amsthm}

\usepackage{xspace}
\usepackage{multirow}
\usepackage{xurl}

\usepackage{algorithm}
\usepackage{algorithmic}
\usepackage{wrapfig}
\usepackage{caption}
\usepackage{subcaption}
\usepackage[table]{xcolor}

\usepackage{pifont}

\usepackage{enumitem}

\theoremstyle{plain}
\newtheorem{theorem}{Theorem}

\newtheorem{corollary}{Corollary}
\newtheorem*{thm:flexours}{Theorem 1}

\newcommand{\RR}{\mathbb{R}}

\newcommand{\set}[1]{\mathcal{#1}}

\providecommand{\sE}{\ensuremath{\set{E}}}

\providecommand{\sN}{\ensuremath{\set{N}}}
\providecommand{\sO}{\ensuremath{\set{O}}}

\providecommand{\sS}{\ensuremath{\set{S}}}

\providecommand{\sV}{\ensuremath{\set{V}}}
\providecommand{\sX}{\ensuremath{\set{X}}}

\providecommand{\sZ}{\ensuremath{\set{Z}}}

\providecommand{\shE}{\ensuremath{\widehat{\set{E}}}}

\providecommand{\shV}{\ensuremath{\widehat{\set{V}}}}

\renewcommand{\vec}[1]{{\bf{#1}}}
\providecommand{\va}{\ensuremath{\vec{a}}}

\providecommand{\ve}{\ensuremath{\vec{e}}}

\providecommand{\vk}{\ensuremath{\vec{k}}}

\providecommand{\vx}{\ensuremath{\vec{x}}}

\providecommand{\vq}{\ensuremath{\vec{q}}}

\renewcommand{\vv}{\ensuremath{\vec{v}}}

\providecommand{\vha}{\ensuremath{\widehat{\vec{a}}}}

\newcommand{\ours}[0]{\textup{MAVN}\xspace}
\newcommand{\gcn}[0]{\textup{GCN}\xspace}
\newcommand{\gine}[0]{\textup{GINE}\xspace}
\newcommand{\gatedgcn}[0]{\textup{GatedGCN}\xspace}

\newcommand{\sage}[0]{\textup{GraphSAGE}\xspace}

\newcommand{\gat}[0]{\textup{GAT}\xspace}

\newcommand{\drew}[0]{\textup{DRew}\xspace}
\newcommand{\laser}[0]{\textup{LASER}\xspace}

\newcommand{\stgcn}[0]{\textup{S$^{2}$GCN}\xspace}

\newcommand{\amp}[0]{\textup{AMP}\xspace}
\newcommand{\cognn}[0]{\textup{Co-GNN}\xspace}
\newcommand{\nba}[0]{\textup{NBA}\xspace}

\newcommand{\gps}[0]{\textup{GraphGPS}\xspace}
\newcommand{\exphormer}[0]{\textup{Exphormer}\xspace}
\newcommand{\grit}[0]{\textup{GRIT}\xspace}
\newcommand{\gvit}[0]{\textup{Graph ViT}\xspace}

\newcommand{\polynormer}[0]{\textup{Polynormer}\xspace}
\newcommand{\polyformer}[0]{\textup{PolyFormer}\xspace}

\newcommand{\megraph}[0]{\textup{MeGraph}\xspace}
\newcommand{\prmpnn}[0]{\textup{PR-MPNN}\xspace}
\newcommand{\iprmpnn}[0]{\textup{IPR-MPNN}\xspace}
\newcommand{\geaet}[0]{\textup{GEAET}\xspace}

\newcommand{\nsquare}[0]{\textup{N$^{2}$}\xspace}

\newcommand{\uniconv}[0]{\textup{UniGCN}\xspace}
\newcommand{\revgnn}[0]{\textup{ReP}\xspace}
\newcommand{\jdr}[0]{\textup{JDR}\xspace}
\newcommand{\comfy}[0]{\textup{ComFy}\xspace}

\newcommand{\pepfunc}[0]{\textsl{Peptides-func}\xspace}
\newcommand{\pepstruct}[0]{\textsl{Peptides-struct}\xspace}
\newcommand{\pascal}[0]{\textsl{PascalVOC-SP}\xspace}
\newcommand{\coco}[0]{\textsl{COCO-SP}\xspace}

\newcommand{\empire}[0]{\textsl{roman-empire}\xspace}
\newcommand{\amazon}[0]{\textsl{amazon-ratings}\xspace}
\newcommand{\minesweeper}[0]{\textsl{minesweeper}\xspace}
\newcommand{\tolokers}[0]{\textsl{tolokers}\xspace}
\newcommand{\questions}[0]{\textsl{questions}\xspace}
\newcommand{\synunder}[0]{\textsl{Tree-LeafCount}\xspace}
\newcommand{\synsquash}[0]{\textsl{Tree-NeighborsMatch}\xspace}
\newcommand{\treelc}[0]{\textsl{Tree-LC}\xspace}
\newcommand{\treenb}[0]{\textsl{Tree-NM}\xspace}
\newcommand*{\boldone}{\text{\usefont{U}{bbold}{m}{n}1}}

\definecolor{c1}{RGB}{25, 100, 175}
\definecolor{c2}{RGB}{30, 140, 0}
\definecolor{c3}{RGB}{225, 100, 0}

\definecolor{c4}{RGB}{156, 97, 49}
\definecolor{c5}{RGB}{160, 32, 240}

\begin{document}

%%
%% The "title" command has an optional parameter,
%% allowing the author to define a "short title" to be used in page headers.

\title{Learn When and Where to Connect: Adaptive Virtual Nodes for Dynamic Message Passing on Graphs}

%%
%% The "author" command and its associated commands are used to define
%% the authors and their affiliations.
%% Of note is the shared affiliation of the first two authors, and the
%% "authornote" and "authornotemark" commands
%% used to denote shared contribution to the research.
%\author{}
%\email{}
%\orcid{}
%\author{}
%\authornotemark[1]
%\email{}
%\affiliation{%
%  \institution{}
%  \city{}
%  \state{}
%  \country{}
%}
\author{Jaejun Lee}
\affiliation{%
\department{School of Computing}
  \institution{KAIST}
  \city{Daejeon}
  \country{Republic of Korea}
}
\email{jjlee98@kaist.ac.kr}

\author{Joyce Jiyoung Whang}
\authornote{Corresponding author.}
\affiliation{%
  \department{Department of AI Computing}
  \institution{KAIST}
  \city{Daejeon}
  \country{Republic of Korea}
}
\email{jjwhang@kaist.ac.kr}
%%
%% By default, the full list of authors will be used in the page
%% headers. Often, this list is too long, and will overlap
%% other information printed in the page headers. This command allows
%% the author to define a more concise list
%% of authors' names for this purpose.
%\renewcommand{\shortauthors}{}

%%
%% The abstract is a short summary of the work to be presented in the
%% article.
\begin{abstract}
While Virtual Nodes (VNs) are often utilized in Message Passing Neural Networks (MPNNs) to facilitate effective message passing, existing VN-based methods have limitations, such as constraining all nodes to connect to the same number of VNs, fixing the connections before applying MPNNs, and connecting a node to a VN independently of the other nodes that connect to the same VN. We propose \ours, an end-to-end differentiable MPNN framework that allows non-constrained connections between nodes and VNs and dynamically introduces VNs on demand in response to evolving node representations across layers. Specifically, \ours learns to adaptively determine \textit{when} (at which layer) and \textit{where} (to which nodes) to introduce and connect VNs based on the relative importance of connections. From a pool of candidate VNs, \ours selects the necessary VNs in each layer, where each selected VN is connected to a nonempty subset of nodes, guided by a dual-perspective scoring mechanism that jointly captures the nodes' preferences for VNs and the VNs' preferences for nodes. We theoretically prove that for any node-VN connectivity pattern, there exists a set of \ours's parameters that can simulate the pattern. Experiments on nine real-world datasets demonstrate that \ours consistently improves the performance of backbone MPNNs, achieving up to 46.5\% improvement over the backbones and outperforms the baselines. 
\end{abstract}

%%
%% The code below is generated by the tool at http://dl.acm.org/ccs.cfm.
%% Please copy and paste the code instead of the example below.
%%
\begin{CCSXML}
<ccs2012>
   <concept>
       <concept_id>10010147.10010257.10010293.10010294</concept_id>
       <concept_desc>Computing methodologies~Neural networks</concept_desc>
       <concept_significance>500</concept_significance>
       </concept>
   <concept>
       <concept_id>10010147.10010178</concept_id>
       <concept_desc>Computing methodologies~Artificial intelligence</concept_desc>
       <concept_significance>500</concept_significance>
       </concept>
   <concept>
       <concept_id>10010147.10010257</concept_id>
       <concept_desc>Computing methodologies~Machine learning</concept_desc>
       <concept_significance>500</concept_significance>
       </concept>
 </ccs2012>
\end{CCSXML}

\ccsdesc[500]{Computing methodologies~Neural networks}
\ccsdesc[500]{Computing methodologies~Artificial intelligence}
\ccsdesc[500]{Computing methodologies~Machine learning}

%%
%% Keywords. The author(s) should pick words that accurately describe
%% the work being presented. Separate the keywords with commas.
\keywords{Virtual Nodes, Message Passing Neural Networks, Graph Representation Learning, Graph Neural Networks}
%% A "teaser" image appears between the author and affiliation
%% information and the body of the document, and typically spans the
%% page.

%%
%% This command processes the author and affiliation and title
%% information and builds the first part of the formatted document.
\maketitle
\newcommand\kddavailabilityurl{https://doi.org/10.5281/zenodo.20446608}
\newcommand\githuburl{https://github.com/bdi-lab/MAVN}
\ifdefempty{\kddavailabilityurl}{}{
\begingroup\small\noindent\raggedright\textbf{Resource Availability:}\\
% please change the following context to include multiple artifacts if necessary, including data, models, code, etc.
The code has been made publicly available at \url{\kddavailabilityurl}. The latest version is at \url{\githuburl}.
\endgroup
}

\section{Introduction}

Graph Neural Networks (GNNs)~\cite{gnnjour, gnnconf, gnnsurv} are a class of neural networks designed for graph data, and many GNN models are categorized as Message Passing Neural Networks (MPNNs)~\cite{mpnn}, which leverage the graph structure to iteratively compute node representations by aggregating information from neighboring nodes. While effective, MPNNs exhibit several well-known limitations. For example, insufficiently deep layers cause MPNNs to fail to capture long-range dependencies appropriately (\textit{under-reaching})~\cite{undr}, while excessively deep layers can lead to overly similar receptive fields, making representations indistinguishable (\textit{over-smoothing})~\cite{ovsm}. Also, fixed-size vectors in MPNNs may struggle to encode information from extensive receptive fields (\textit{over-squashing})~\cite{ovsq}.

\begin{figure}
\centering
\includegraphics[width=0.92\linewidth]{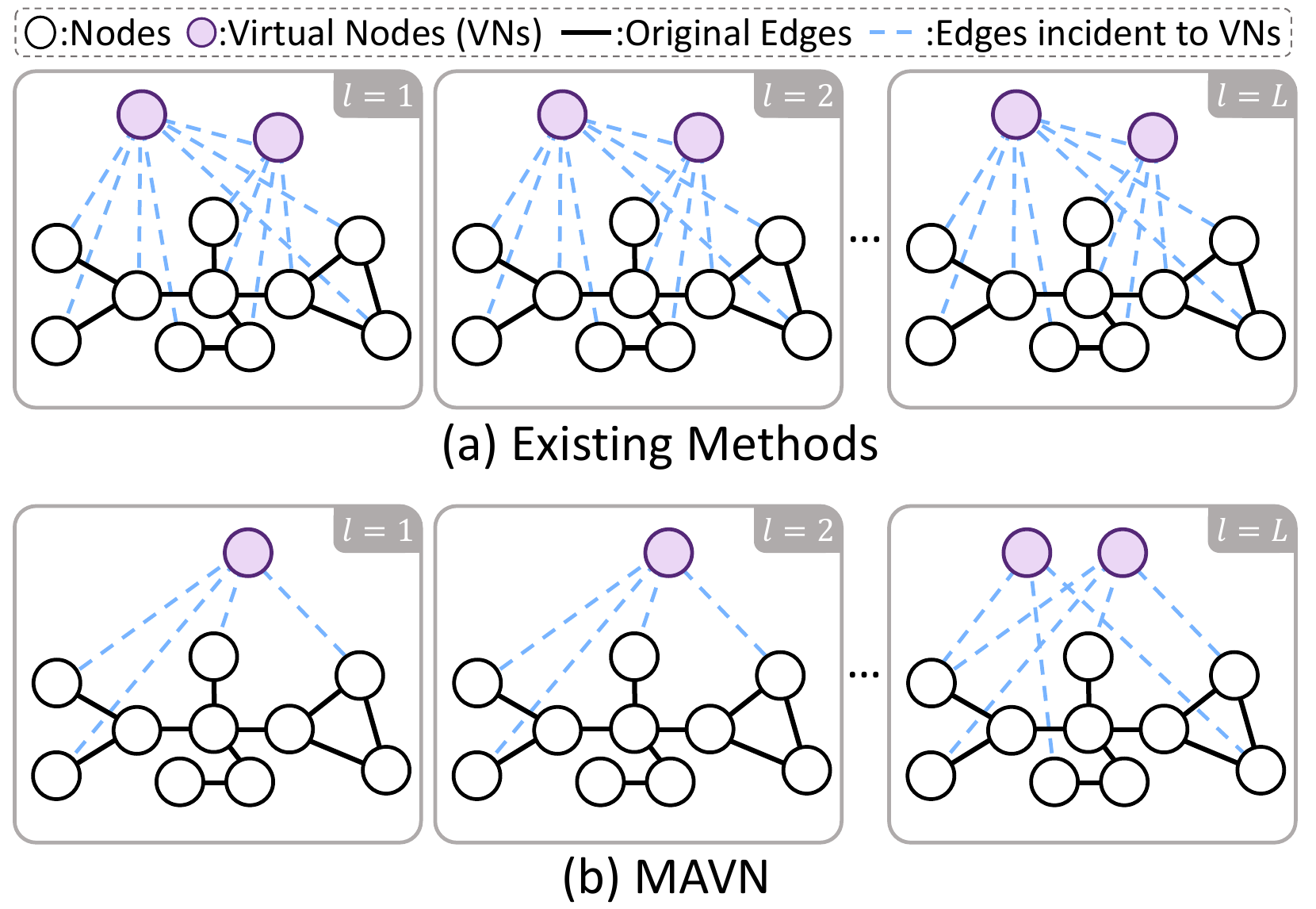}
\caption{Comparison of existing VN-based methods and \ours. While existing methods connect all nodes to the same number of VNs and fix the connections across all layers, \ours allows nodes to connect to a variable number of VNs and enables VNs to emerge at different layers.}
\label{fig:cmp}
\end{figure}

While graph rewiring methods~\cite{jdr,prmpnn} add or remove edges to address the issues, they usually incur quadratic complexity in the number of nodes, as every pair of nodes becomes a candidate for consideration. A more scalable solution is introducing Virtual Nodes (VNs), auxiliary nodes connected to the original nodes~\cite{mpnn} with linear complexity. For example, \nsquare~\cite{nsquare} connects all nodes to all VNs and computes node-VN edge weights at each layer, while \iprmpnn~\cite{iprmpnn} first computes node representations via an upstream MPNN, then links each node to a fixed number of VNs before applying a downstream MPNN. Figure~\ref{fig:cmp}(a) illustrates the graph structure produced by existing VN-based approaches, where each node connects to the same number of VNs; these connections are predetermined before applying an MPNN and fixed across all layers. When establishing a connection between a node and a VN, these methods disregard other nodes that connect to the same VN.

However, this constrained design fails to consider that individual nodes may require varying numbers of connections and that additional VNs may need to be introduced at specific layers. For example, while hub nodes are likely to effectively communicate with other nodes, low-degree nodes can benefit from auxiliary connections. Furthermore, while some nodes may require VNs across all layers, others might benefit from VNs only in some layers. Crucially, since nodes connected to the same VN exchange messages at the subsequent layers, establishing a node-VN connection should account for the other nodes connecting to that VN. Consequently, it is desirable that nodes connect to varying numbers of VNs, VNs should be adaptively introduced at different layers, and node-VN connections should be established jointly rather than in isolation.

We propose an end-to-end differentiable MPNN framework that allows unconstrained and non-uniform connections between nodes and VNs and enables adding VNs at any layer of MPNN when needed by inspecting the need for new VNs using the updated representations at each layer. We named our framework \ours (dynamic \underline{M}essage passing with \underline{A}daptive \underline{V}irtual \underline{N}odes), pronounced Maven. Figure~\ref{fig:cmp}(b) briefly visualizes how \ours differs from existing methods. \ours learns to determine \textit{when} (i.e., at which layer) and \textit{where} (i.e., to which nodes) to introduce and connect VNs by accounting for the relative importance of candidate connections derived from the representations of nodes and VNs. At each layer, \ours identifies and selects the necessary VNs, and if present, connects them to nodes by considering both the nodes' preference for VNs and the VNs' preference for nodes. During this process, only node representations are utilized, enabling \ours to be applied to any MPNN architecture. \ours facilitates dynamic message passing, where message passing paths evolve across layers by emerging VNs, which are adaptively introduced and connected to nodes based on the representations returned by each layer of MPNN. Furthermore, we theoretically prove that for any connectivity pattern between nodes and VNs, the parameters of \ours can be configured to construct such a pattern, demonstrating its capability to generate any required message passing paths involving VNs. Our key contributions are summarized as follows:

\begin{itemize}
\item We propose \ours, a framework that introduces VNs and connects them to nodes at each layer in an adaptive and unconstrained manner. This design enables \ours to generate message passing paths tailored to each graph, mitigating over-squashing and under-reaching.
\item In \ours, employing VNs and updating node representations are interleaved, enabling effective and dynamic message propagation. Also, since \ours utilizes only node representations, it can be applied to any MPNN.
\item We theoretically prove that for any graph structure augmented by connecting VNs to nodes, there exists a single-layer \ours that constructs the same structure. Consequently, \ours can simulate graph structures of existing VN-based methods (e.g., \nsquare~\cite{nsquare} and \iprmpnn~\cite{iprmpnn}).
\item \ours outperforms 28 state-of-the-art methods on nine datasets for node and graph level tasks, consistently improving the backbone MPNNs' performance.
\end{itemize}

\section{Related Work}

\paragraph{Graph Rewiring}
Graph rewiring methods~\cite{prmpnn, laser, borf} modify the connectivity of a graph to handle over-squashing or over-smoothing. Several approaches leverage structural properties such as the spectral gap~\cite{fosr, diffwire} or the effective resistance~\cite{ovsqer}. While \drew~\cite{drew} proposes a layer-dependent rewiring, some other methods~\cite{comfy, jdr} utilize node features combined with the spectral characteristics of the graph. However, these graph rewiring methods can only locally modify direct connections between a node pair and cannot model interactions between a broader subset of nodes.

\paragraph{VN-based Methods}
While the concept of a VN was originally introduced to provide global information to all nodes~\cite{vngt, exphormer}, VNs have since been used in various ways~\cite{megraph, geaet, vcr}. For example, \iprmpnn~\cite{iprmpnn} learns a probability distribution over node-VN edges and samples $k$ edges per node. \nsquare~\cite{nsquare} connects each VN to all nodes with edge weights varying every layer. Both \iprmpnn and \nsquare require all nodes to connect to the same number of VNs, which are fixed at all layers, and enforce that all VNs are fully connected. Importantly, these methods establish each node-VN connection independently of one another, overlooking that the nodes sharing a VN interact via message passing. In contrast, \ours allows nodes to adaptively determine which VNs to connect, including none, by accounting for the other nodes in the graph, while also enabling VNs to determine which VNs to connect to at each layer.

\paragraph{Graph Structural Learning}
Graph Structure Learning (GSL) methods~\cite{gslsurv, opengsl} jointly optimize the graph topology and node representations by learning to generate a refined adjacency matrix, aiming to mitigate noisy connections or construct task-specific graph structures. For instance, Pro-GNN~\cite{prognn} treats GSL as a defense mechanism to recover a clean graph from a graph perturbed by adversarial attacks, while OAGS~\cite{oags} utilizes both node features and labels to guide GSL for node classification. Since these methods reconstruct the entire graph structure, they often incur a quadratic cost in the number of nodes. While \ours also refines the graph structure by introducing auxiliary connections via VNs, it does not aim to reconstruct the adjacency matrix. Instead, \ours is designed to mitigate the limitations of MPNNs by facilitating dynamic message passing, while ensuring linear cost in the number of nodes.

% alter the adjacency matrix for graph reconstruction
% These methods typically focus on reconstructing the entire graph structure, often incurring a quadratic cost in the number of nodes.

\paragraph{Modified Message Passing}
Several works modify how messages are exchanged between neighboring nodes~\cite{amp}. For example, \cognn~\cite{cognn} allows each node to independently decide whether to aggregate, propagate, both, or neither at each layer, while \nba~\cite{nba} eliminates the self-information from neighbor messages to refine aggregation. \revgnn~\cite{revgnn} proposes a reverse process of message passing, i.e., an inversion of the aggregation operation, to compute more distinguishable representations. However, because these methods operate on the fixed original graph structure, they suffer from inefficient communication between distant nodes, as gathering information from $L$-hop neighbors inevitably requires $L$ layers.

\begin{figure*}[t]
\centering
\includegraphics[width=0.92\textwidth]{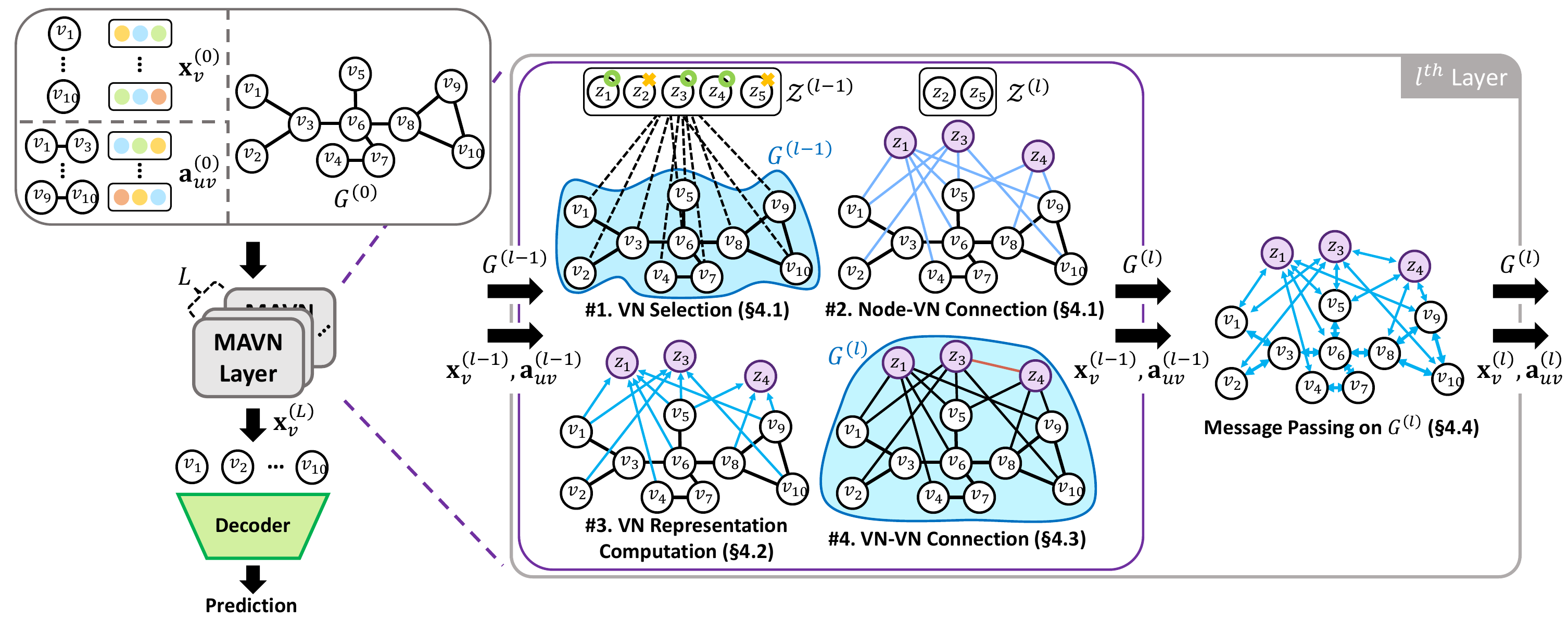}
\caption{Overview of \ours. At the $l$-th layer, \ours selects VNs from $\sZ^{(l-1)}$, adds them to $G^{(l-1)}$ ($\textbf{\#}\mathbf{1}$), and makes connections between the VNs and nodes ($\textbf{\#}\mathbf{2}$). VNs' representations are computed by aggregating their neighboring nodes' representations ($\textbf{\#}\mathbf{3}$). VNs are selectively connected to other VNs ($\textbf{\#}\mathbf{4}$). Once $G^{(l-1)}$ changes to $G^{(l)}$ with added VNs, node-VN connections, and VN-VN connections, an MPNN layer is applied to update representations, which are passed to the subsequent layer.
}
\label{fig:ours}
\end{figure*}

\section{Preliminaries}
Given an undirected graph $G=(\sV,\sE)$, where $\sV$ is a set of nodes and $\sE\subseteq\sV\times\sV$ is a set of edges, each node $v\in\sV$ and edge $(u, v)\in\sE$ has feature vectors $\vv\in\RR^{d'}$ and $\ve_{uv}\in\RR^{d''}$, respectively, where $d'$ and $d''$ denote the dimensions of the node and edge feature vectors, respectively. Let $G^{(l)}=(\sV^{(l)}, \sE^{(l)})$ denote the graph at the $l$-th layer, where $l=1,\cdots,L$, and $L$ is the number of layers. $G^{(0)}=(\sV^{(0)},\sE^{(0)})$ denotes the input graph.

In MPNNs, node representations are updated by aggregating the messages of neighboring nodes, while edge representations are updated using incident nodes. Let $d_l$ denote the dimension at the $l$-th layer. Given $G=(\sV, \sE)$, the $l$-th layer of an MPNN computes $\vx_v^{(l)}\in\RR^{d_l}$ of a node $v\in\sV$ and $\va_{uv}^{(l)}\in\RR^{d_l}$ of an edge $(u,v)\in\sE$ as:
\begin{gather}
    \vx_{v}^{(l)}=\texttt{UPD}_{\texttt{NODE}}^{(l)}\big( \vx_{v}^{(l-1)}, \texttt{AGGR}^{(l)}\big(\{(\vx_{u}^{(l-1)}, \va_{uv}^{(l)}) | u\in\sN_v \}\big)\big),\\
    \va_{uv}^{(l)}=\texttt{UPD}_{\texttt{EDGE}}^{(l)}\big( \va_{uv}^{(l-1)}, \vx_{u}^{(l-1)}, \vx_{v}^{(l-1)} \big),
\end{gather}
where $\vx_v^{(0)}, \va_{uv}^{(0)}\in\RR^{d_0}$ are computed using their feature vectors $\vv$ and $\ve_{uv}$, respectively, and $d_0$ is the dimension. $\texttt{AGGR}^{(l)}$ is an aggregation function at the $l$-th layer and $\texttt{UPD}_{\texttt{NODE}}^{(l)}$ and $\texttt{UPD}_{\texttt{EDGE}}^{(l)}$ are update functions for nodes and edges at the $l$-th layer, respectively. \gcn~\cite{gcn}, \gat~\cite{gat}, and \sage~\cite{gsage} do not use edge representations. \gine~\cite{gine} utilizes the edge representations without updating them, and \gatedgcn~\cite{gatedgcn} uses and updates them.

\section{\ours: Adaptive Virtual Nodes for Dynamic Message Passing}
\label{sec:ours}

Figure~\ref{fig:ours} provides an overview of \ours, which iteratively expands $G^{(l-1)}$ to $G^{(l)}$ for $l=1,\cdots,L$ by introducing VNs and adding edges between VNs and nodes (\S~\ref{subsec:w2vn}), computing VNs' representations using their neighboring nodes' representations (\S~\ref{subsec:vnrep}), and adding connections between VNs (\S~\ref{subsec:vnvn}). Once $G^{(l-1)}$ changes to $G^{(l)}$, an MPNN layer is applied to update representations, which are fed into the subsequent layer (\S~\ref{subsec:dmp}). \ours learns to determine how many and which VNs should be introduced at each layer and which node-VN and VN-VN connections should be formed.

\subsection{Layer-wise Selection of Virtual Nodes and Forming Node-VN Connections}
\label{subsec:w2vn}

Given {$G^{(0)}\!=\!(\sV^{(0)},\sE^{(0)})$}, \ours begins with a pool of {$M$} candidate VNs, denoted by {$\sZ^{(0)}$}, where the hyperparameter {$M$} indicates the budget for the total number of VNs. At the {$l$}-th layer, where {$l=1,\cdots,L$}, \ours selects VNs from {$\sZ^{(l-1)}$} based on the graph-level preference scores for the candidate VNs, which indicates how much the current graph {$G^{(l-1)}$} prefers the corresponding VN. The selected VNs are removed from {$\sZ^{(l-1)}$}, and the remaining VNs comprise {$\sZ^{(l)}$}. In this process, \ours can select a varying number of VNs at each layer, including none. Once selected, the VNs are connected to a nonempty subset of nodes by computing each connection's score based on a dual-perspective scoring, which considers both the node-level and VN-level preference scores. Each node possibly connects to different numbers of VNs, including none. All the graph-level, node-level, and VN-level preference scores are derived from relevance scores between nodes and VNs, as described below.

\paragraph{Relevance Score Computation} The relevance score {$s_{vz}^{(l)}$} between a node {$v\in\sV^{(l-1)}$} and a candidate VN {$z\in\sZ^{(l-1)}$} is computed using the scaled dot-product as follows:
\begin{equation}
\label{eq:rel_score}
s_{vz}^{(l)} = \text{MLP}^{(l)}(\vx_{v}^{(l-1)}) \cdot \vk_z^{(l)} / \sqrt{d_\text{dot}},
\end{equation}
where $\text{MLP}^{(l)}:\RR^{d_{l-1}}\!\to\!\RR^{d_\text{dot}}$ denotes a multilayer perceptron, {$d_\text{dot}$} denotes the dimension of the dot product, and {{$\vk_z^{(l)}\!\in\!\RR^{d_\text{dot}}$}} is a learnable key vector for a VN {$z$} at the $l$-th layer. The relevance score between a node and a VN reflects their pairwise affinity, independent of the other nodes and candidate VNs. Instead of directly utilizing this score for selection processes, we introduce a score adjustment mechanism that calibrates the relevance scores based on their relative importance across all candidates, deriving scores conditioned on the set of candidate connections.

\paragraph{VN Selection} The connection score {$\bar{s}_{vz}^{(l)}$} of a node-VN pair {$(v,z)$} is computed by adjusting the relevance score {${s}_{vz}^{(l)}$} with its relative significance among all node-VN pairs:
\begin{equation}
    \bar{s}_{vz}^{(l)} = s_{vz}^{(l)} + \alpha\texttt{logsoftmax}(\sS^{(l)})[s_{vz}^{(l)}],
\end{equation}
where hyperparameter $\alpha\geq0$ controls the balance between the relevance score and its relative significance, $\sS^{(l)}\!=\!\{ s_{v'z'}^{(l)}|v'\in\sV^{(l-1)},$ $z'\in\sZ^{(l-1)}\}$ is the set of relevance scores of all node-VN pairs, and $\texttt{logsoftmax}(\sS)[s]$ represents the log value of $s\in\sS$ after applying a softmax operation over $\sS$. Since the output of $\texttt{logsoftmax}$ is non-positive, this term acts purely as a penalty. Consequently, the adjusted score balances the absolute pairwise affinity with the relative significance, effectively suppressing connections that are insignificant within the candidate set. This adjustment mechanism is applied throughout subsequent selection stages using candidate sets specific to each selection process, thereby effectively conditioning scores on the candidates and pruning out unnecessary connections while establishing significant ones.

The graph-level preference score for a VN $z$, denoted by $s_{z}^{(l)}$, is computed by aggregating the connection scores of all node-VN pairs involving $z$, with the $\texttt{logmeanexp}$ function:
\begin{equation}
    s_{z}^{(l)}=\texttt{logmeanexp}(\{\bar{s}_{vz}^{(l)}|v\in\sV^{(l-1)}\}),
\end{equation}
where $\texttt{max}(\sS)-\texttt{log}(|\sS|)\!\leq\!\texttt{logmeanexp}(\sS)\!\!=\!\text{log}(\sum_{s\in\sS}\text{exp}(s)/|\sS|)$ $\leq\texttt{max}(\sS)$ is a smooth approximation of the maximum that enables gradients to flow through all nodes. Using the graph-level preference scores for VNs, \ours selects {$z$} such that $\texttt{sigmoid}(s_{z}^{(l)}) \geq 0.5$. Let $\shV^{(l)}$ denote the set of VNs selected at the {$l$}-th layer. Notably, for each VN $z'\in\shV^{(l)}$, there exists at least one node $v'$ such that $\bar{s}_{v'z'}^{(l)}\geq0$, a property derived from the upper bound of $\texttt{logmeanexp}$.

\paragraph{Connecting Nodes and VNs} When $\shV^{(l)}\neq\emptyset$, \ours decides whether an edge between a node $v$ and a VN $z$, i.e., $(v,z)\in\sV^{(l-1)}\times\shV^{(l)}$, should be formed or not, based on a dual-perspective scoring that scores $(v,z)$ from both $z$'s and $v$'s point of view using two preference scores: a VN-level preference score $\tilde{s}_{vz|z}^{(l)}$, computed by adjusting the relevance score based on how significant $v$ is for $z$ compared to all nodes in $\sV^{(l-1)}$, and a node-level preference score $\tilde{s}_{vz|v}^{(l)}$, computed by adjusting the relevance score based on how significant $z$ is for $v$ compared to all VNs in $\shV^{(l)}$. These preference scores are combined to yield the final edge score $\tilde{s}_{vz}^{(l)}$:
\begin{equation}
    \tilde{s}_{vz}^{(l)}=\beta^{(l)}\cdot\tilde{s}_{vz|z}^{(l)}+(1-\beta^{(l)})\cdot \tilde{s}_{vz|v}^{(l)},
\end{equation}
where $\tilde{s}_{vz|z}^{(l)}\!=\!s_{vz}^{(l)}\!+\!\alpha\texttt{logsoftmax}(\sS^{(l)}_{:z})[s_{vz}^{(l)}]$ and $\tilde{s}_{vz|v}^{(l)}\!=\!s_{vz}^{(l)}\!+\!\alpha\texttt{log}$-$\texttt{softmax}(\sS^{(l)}_{v:})[s_{vz}^{(l)}]$, $\sS^{(l)}_{:z}\!\!=\!\{ s_{v'z}^{(l)}|v'\in\sV^{(l-1)}\}$ is the set of relevance scores for all candidate edges between {$z$} and the nodes in $\sV^{(l-1)}$, $\sS^{(l)}_{v:}\!=\!\{ s_{vz'}^{(l)} | z'\!\!\in\!\shV^{(l)}\}$ is the set of relevance scores for all candidate edges between {$v$} and all VNs in $\shV^{(l)}$, and $\beta^{(l)}\in[0,1]$ is a learnable scalar that controls the balance between the VN-level and node-level scores. An edge between $v$ and $z$ is formed if $\texttt{sigmoid}(\tilde{s}_{vz}^{(l)})\!\geq\!0.5$, and its representation $\va^{(l-1)}_{vz}$ is set to be a learnable vector $\vha^{(l)}_\text{N-VN}\!\in\!\mathbb{R}^{d_{l-1}}$ shared across all edges in $\shE^{(l)}_\text{N-VN}$, the set of added node-VN edges at the {$l$}-th layer. Note that both $\texttt{logsoftmax}(\sS^{(l)}_{:z})[s_{vz}^{(l)}]$ and $\texttt{logsoftmax}($ $\sS^{(l)}_{v:})[s_{vz}^{(l)}]$ are greater than or equal to $\texttt{logsoftmax}(\sS^{(l)})[s_{vz}^{(l)}]$, since $\sS^{(l)}_{:z}$ and $\sS^{(l)}_{v:}$ are subsets of $\sS^{(l)}$. This relationship ensures that $\tilde{s}_{vz}^{(l)}\geq\bar{s}_{vz}^{(l)}$; thus, every selected VN $z'\in\shV^{(l)}$ is guaranteed to be connected to at least one node, as there exists a node $v'$ satisfying $\bar{s}_{v'z'}^{(l)}\geq0$. 

\subsection{Computing Virtual Nodes' Representations via Aggregating Nodes' Representations}
\label{subsec:vnrep}
A representation $\vx_{z}^{(l-1)}\in\RR^{d_{l-1}}$ of a VN $z\in\shV^{(l)}$ is computed by a weighted combination of its learnable seed representation $\vq_z^{(l)}\in\RR^{d_{l-1}}$ and an aggregated representation of its connected nodes with a learnable gating vector $\boldsymbol{\gamma}^{(l)}\in[0,1]^{d_{l-1}}$:
\begin{equation}
        \vx_{z}^{(l-1)}=\boldsymbol{\gamma}^{(l)}\odot\vq_z^{(l)}
    +(\mathbf{1}_{d_{l-1}}-\boldsymbol{\gamma}^{(l)})\odot\left(\sum\nolimits_{v \in \sN_{z}^{(l)}}\frac{p_{vz}^{(l)}\vx_{v}^{(l-1)}}{c_z^{(l)}}\right),\\
\end{equation}
where $\mathbf{1}_{d_{l-1}}\in\mathbb{R}^{d_{l-1}}$ is a $d_{l-1}$-dimension vector of ones, {$\odot$} denotes the Hadamard product, $\sN_{z}^{(l)}\!=\!\{v|(v,z)\!\in\!\shE^{(l)}_\text{N-VN}\}$ is the set of nodes connected to $z$, $p_{vz}^{(l)}=\texttt{sigmoid}(\tilde{s}_{vz}^{(l)})$, and $c_z^{(l)}$ determines the aggregation type: $c_z^{(l)}=\sum_{v \in \sN_{z}^{(l)}}{ p_{vz}^{(l)}}$ indicates a weighted mean, whereas $c_z^{(l)}=1$ indicates a weighted sum. This aggregation is performed directly without applying additional linear projections, ensuring that the derived VN representations align with the feature space of the node representations.

\subsection{Selective Connections Between VNs}
\label{subsec:vnvn} Edges between VNs in $\shV^{(l)}$ are formed using a procedure similar to that used for node-VN connections. A relevance score between two VNs {$z$} and {$u$} is computed by $s_{zu}^{(l)}=\text{MLP}^{(l)}\big(\vx_{z}^{(l-1)}\big)\cdot\text{MLP}^{(l)}\big(\vx_{u}^{(l-1)}\big)/\sqrt{d_\text{dot}}$, where $\text{MLP}^{(l)}$ is the one in Eq.~\ref{eq:rel_score}. Then, we consider both VNs' viewpoints, $\bar{s}_{zu|z}^{(l)}\!=\!s_{zu}^{(l)}\!+\!\alpha\texttt{logsoftmax}(\{s_{zu'}^{(l)}|u'\!\in\!$ $\shV^{(l)}\backslash\{z\}\})[s_{zu}^{(l)}]$ and $\bar{s}_{zu|u}^{(l)}\!=\!s_{zu}^{(l)}+\alpha\texttt{logsoftmax}(\{ s_{z'u}^{(l)}|z'\!\in\!\shV^{(l)}\backslash\{u\}$ $\})[s_{zu}^{(l)}]$, to obtain the final edge score $\bar{s}_{zu}^{(l)}\!=\!\frac{1}{2}{(\bar{s}_{zu|z}^{(l)}\!+\!\bar{s}_{zu|u}^{(l)})}$. Since both {$z$} and {$u$} are VNs, we equally weigh both perspectives in computing $\bar{s}_{zu}^{(l)}$. This scoring formulation jointly evaluates the affinity between VNs via $s_{zu}^{(l)}$ and their relative significance compared to other candidates via the \texttt{logsoftmax} term. An edge between {$z$} and {$u$} is formed if $\texttt{sigmoid}(\bar{s}_{zu}^{(l)})\!\geq\!0.5$, and its representation is set to a learnable vector $\vha^{(l)}_\text{VN-VN}\in \mathbb{R}^{d_{l-1}}$ shared across all VN-VN edges in $\shE_\text{VN-VN}^{(l)}$, the set of added VN-VN edges at the {$l$}-th layer.

\subsection{Dynamic Message Passing with Adaptively Introduced Virtual Nodes}\label{subsec:dmp}
At the {$l$}-th layer, $G^{(l-1)}$ changes to $G^{(l)}=(\sV^{(l)}, \sE^{(l)})=(\sV^{(l-1)}\cup\shV^{(l)}, \sE^{(l-1)}\cup\shE^{(l)})$, where $\shV^{(l)}$ is the set of VNs selected at the {$l$}-th layer (\S~\ref{subsec:w2vn}) and $\shE^{(l)}$ is the set of edges added at the {$l$}-th layer. Note that $\shE^{(l)}$ consists of $\shE^{(l)}_\text{N-VN}$ (\S~\ref{subsec:w2vn}) and $\shE^{(l)}_\text{VN-VN}$ (\S~\ref{subsec:vnvn}). Given $G^{(l)}$, an MPNN layer computes representations at the {$l$}-th layer, which are passed to the subsequent layer along with $G^{(l)}$. In this way, $G^{(0)}$ dynamically and iteratively changes to $G^{(l)}$ for {$l=1,\cdots,L$}, where VNs are adaptively introduced when needed and connected to nodes and other VNs, as many as needed at each layer. \ours only utilizes node representations and does not assume any specific MPNN architecture, so it is architecture-agnostic and can be seamlessly applied to any MPNNs.

\begin{figure*}[t]
\begin{minipage}[t]{0.64\textwidth}\vspace{0pt}
\begin{subfigure}{0.48\textwidth}
    \centering
    \includegraphics[width=\linewidth]{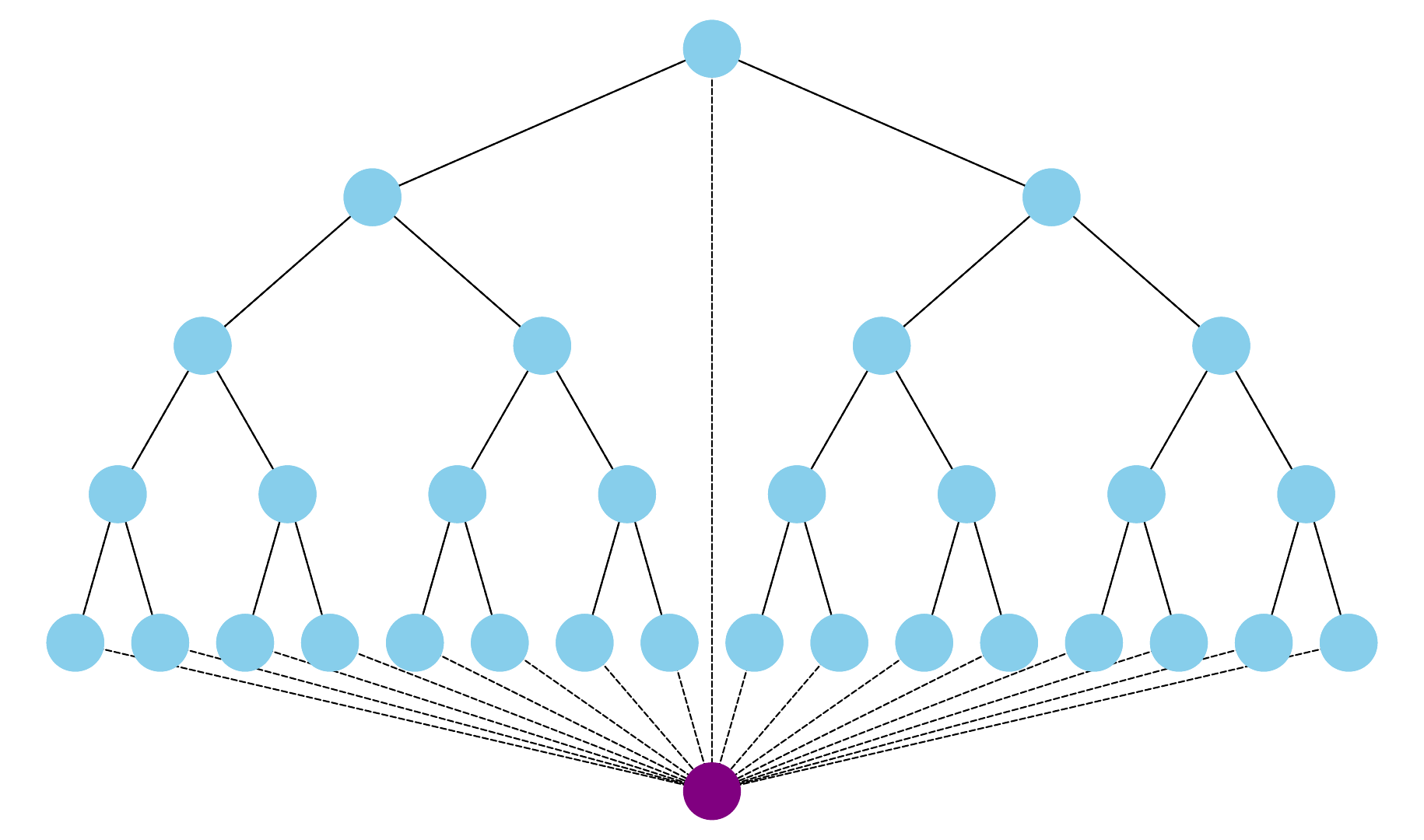}
    \caption{Under-reaching}
    \label{fig:undr}
\end{subfigure}
\begin{subfigure}{0.48\textwidth}
\centering
\includegraphics[width=\linewidth]{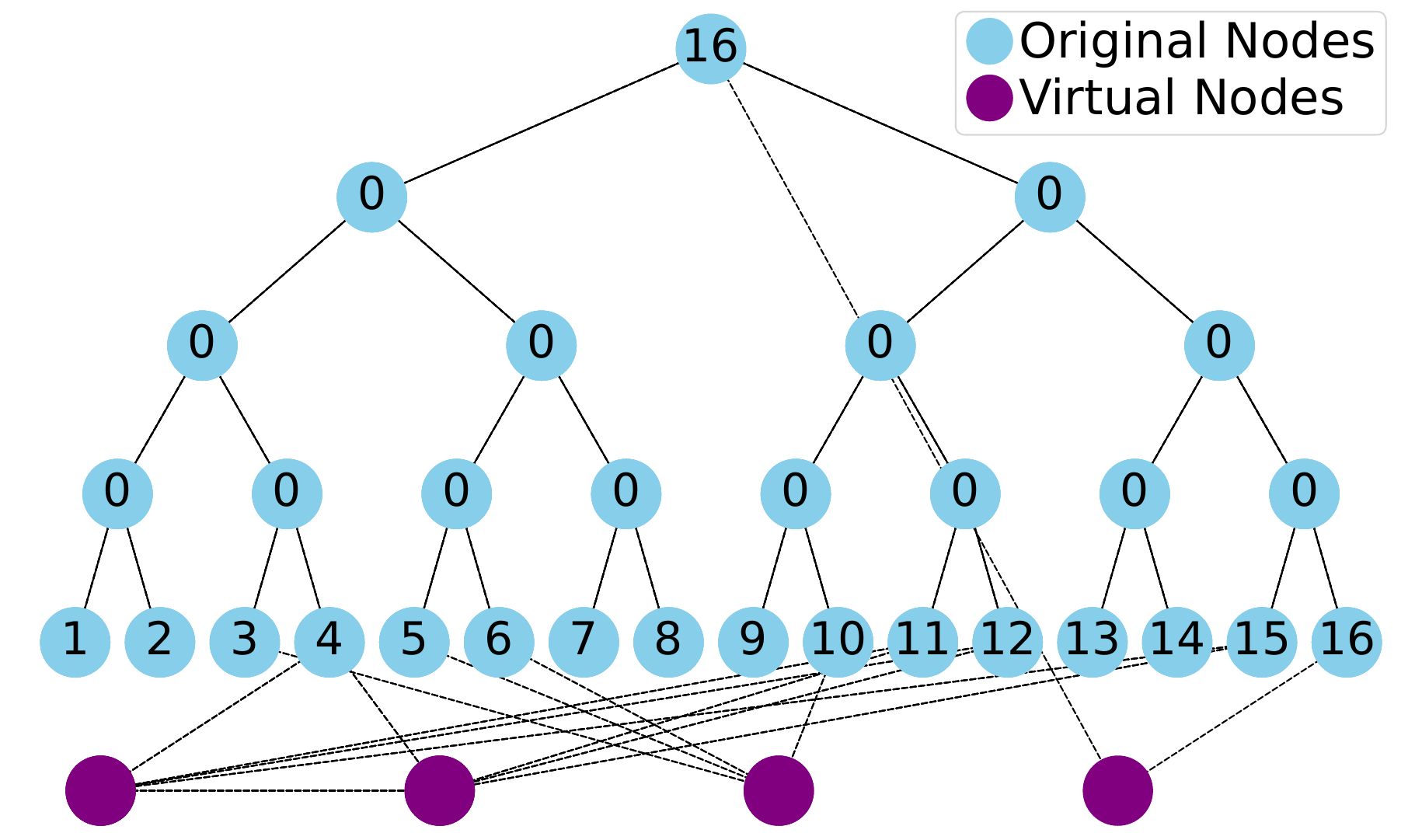}
\caption{Over-squashing}
\label{fig:ovsq}
\end{subfigure}
\caption{Examples of how \ours mitigates under-reaching and over-squashing. Purple nodes indicate VNs introduced by \ours.}
\end{minipage}\hfill
\begin{minipage}[t]{0.34\textwidth}\vspace{0pt}
\captionof{table}{Classification accuracy on two synthetic tree datasets with depths 4 to 6.}
\centering
\begin{tabular}{cccc}
\toprule
& depth & GCN & \ours\\
\midrule
 & 4 & 0.14 & 1.00 \\
\treelc & 5 & 0.11 & 1.00 \\
 & 6 & 0.08 & 1.00 \\
 \midrule
 & 4 & 0.54 & 1.00 \\
\treenb & 5 & 0.22 & 1.00 \\
 & 6 & 0.10 & 1.00 \\
\bottomrule
\end{tabular}
\label{tb:syn}
\end{minipage}
\end{figure*}

\paragraph{Training Details} Since the selection of VNs and edges involves discrete thresholding operations, we use the binary Gumbel Softmax~\cite{gsoft} with a straight-through estimator~\cite{st} to enable gradient propagation. \ours is trained using only a task-specific loss, without any additional losses for controlling the behavior of VNs. More implementation details are provided in Appendix~\ref{app:impl}.

\paragraph{Multi-head Strategy} We adopt a multi-head strategy~\citep{trf} in \S~\ref{subsec:w2vn} and~\ref{subsec:vnvn}. Specifically, each head computes relevance scores independently (\S~\ref{subsec:w2vn}), with dot-product dimension $d_\text{dot}/h$, where $h$ denotes the number of heads. The $\texttt{logsoftmax}$ operation is applied per head to yield the per-head connection score (\S~\ref{subsec:w2vn},~\ref{subsec:vnvn}), the per-head node-VN edge score (\S~\ref{subsec:w2vn}), and the per-head VN-VN edge score (\S~\ref{subsec:vnvn}). These per-head scores are averaged across all heads to produce the final connection score, node-VN edge score, and VN-VN edge score, respectively. The final connection scores are then aggregated over nodes to compute a graph-level preference score for each VN. The graph-level preference scores of VNs, node-VN edge scores, and VN-VN edge scores are each passed through a \texttt{sigmoid} function for selection.

\paragraph{Complexity Analysis} In \ours, \S~\ref{subsec:w2vn} and \S~\ref{subsec:vnrep} require $\sO(M|\sV|)$. \S~\ref{subsec:vnvn} introduces an additional complexity of $\sO(M^2)$. With at most $M|\sV|\!+\!M^2$ new edges, \S~\ref{subsec:dmp} takes $\sO(|\sE|\!+\!M|\sV|\!+\!M^2)$. Thus, the total complexity of \ours is $\sO(L(|\sE|\!\!+\!M|\sV|\!+\!M^2))$, which is asymptotically equivalent to \iprmpnn and \nsquare. In practice, $M$ is typically set to be much smaller than $|\sV|$, which suppresses the quadratic term $M^2$ and yields an overall complexity of $\sO(L(|\sE|\!+\!M|\sV|))$.

\section{Properties of \ours}
\label{sec:propours}
We prove that for any set of message passing paths involving VNs, there exists a parameter configuration of \ours that constructs those paths. Additionally, experiments on synthetic datasets validate that \ours mitigates under-reaching and over-squashing by appropriately introducing VNs.

\subsection{Flexibility of Message Passing Path Generation in \ours}
\label{subsec:flex}
Theorem~\ref{thm:flexours} shows that a single-layer \ours is capable of constructing any message passing paths that involve VNs.
\begin{theorem}
\label{thm:flexours}
Given a graph {$G=(\sV,\sE)$} with {$K$} sets of nodes {$\sV_1, \sV_2, \cdots, \sV_K \subseteq \sV$}, if node representations are uniquely distinguishable (i.e., {$\vx_u=\vx_v\iff u=v, \forall u,v\in \sV$}), there exists a parameter configuration of a single-layer \ours that introduces {$K$} virtual nodes {$z_1, z_2, \cdots, z_K$} into {$G$}, where each {$z_i$} is connected to a node {$v\in\sV$} if and only if {$v\in\sV_i$}.
\end{theorem}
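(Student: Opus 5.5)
The plan is to choose parameters that make the score adjustments vanish. Then node-VN connectivity is controlled entirely by the sign of the relevance score $s_{vz}^{(l)}$ in Eq.~\ref{eq:rel_score}. I would take $M=K$ candidate VNs $z_1,\dots,z_K$, a single head ($h=1$), and $\alpha=0$; the hyperparameter constraint $\alpha\geq0$ allows this. With $\alpha=0$ we get $\bar{s}_{vz}=\tilde{s}_{vz|z}=\tilde{s}_{vz|v}=s_{vz}$, so $\tilde{s}_{vz}=s_{vz}$ for any $\beta^{(1)}$. The selection rule becomes $\texttt{logmeanexp}(\{s_{vz}\mid v\in\sV\})\geq 0$, and the edge rule becomes $s_{vz}\geq 0$. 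The goal is therefore to build $\text{MLP}^{(1)}$ and keys $\vk_{z_i}$ satisfying two conditions:
(i) $s_{vz_i}\geq 0$ if and only if $v\in\sV_i$;
(ii) every $z_i$ passes the graph-level test.
I will assume each $\sV_i$ is nonempty, as implicitly required, since the mechanism of \S~\ref{subsec:w2vn} always connects a selected VN to at least one node. VN-VN edges and VN representations are irrelevant to the claim, so their parameters can be arbitrary.

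\textbf{Step 1 (membership encoding).} Let $n=|\sV|$ and $d_\text{dot}=K+1$. Define the target $\vec{b}_v\in\{0,1\}^{K}$ by $\vec{b}_v[i]=1$ iff $v\in\sV_i$, and set the desired MLP output to $(\vec{b}_v,1)\in\RR^{K+1}$. Because the representations $\vx_v$ are pairwise distinct by assumption, this is an interpolation problem on finitely many distinct points.

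\textbf{Step 2 (keys).} Fix a constant $C\geq\log n$. Set
\[
\vk_{z_i}=\sqrt{d_\text{dot}}\,\bigl((C+1)\,\ve_i-\ve_{K+1}\bigr),
\]
where $\ve_i$ is the $i$-th standard basis vector. Then $s_{vz_i}=(C+1)\vec{b}_v[i]-1$, which equals $C\geq0$ for $v\in\sV_i$ and $-1<0$ otherwise. This gives condition (i).

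\textbf{Step 3 (selection).} Since $\sV_i\neq\emptyset$, at least one term in the sum equals $e^{C}$, so
\[
\texttt{logmeanexp}(\{s_{vz_i}\})\geq \log\bigl(e^{C}/n\bigr)=C-\log n\geq 0.
\]
Hence $\texttt{sigmoid}(s_{z_i})\geq 0.5$ and all $K$ VNs are selected at the single layer, which gives condition (ii). Combined with Step 2, $z_i$ is connected to $v$ exactly when $v\in\sV_i$.

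\textbf{Main obstacle: realizing the interpolating MLP.} The delicate step is showing that an MLP can map the finitely many distinct $\vx_v$ to the prescribed outputs $(\vec{b}_v,1)$. I would do this explicitly with ReLU units rather than appeal loosely to universal approximation.
\begin{enumerate}
\item Pick a direction $\vec{w}$ outside the finitely many hyperplanes $\{\vec{w}:\vec{w}\cdot(\vx_u-\vx_v)=0\}$ for $u\neq v$. The scalars $t_v=\vec{w}\cdot\vx_v$ are then pairwise distinct.
\item Sort the $t_v$ and build, for each output coordinate, a one-hidden-layer ReLU network computing a piecewise-linear function of $t$ that interpolates the prescribed $0/1$ values at these $n$ points. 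For example, use sums of terms $\mathrm{ReLU}(t-c)$ with breakpoints $c$ placed between consecutive $t_v$; this needs at most $O(n)$ hidden units per coordinate.
\item Obtain the constant coordinate $1$ from the output bias.
\end{enumerate}
Exactness matters here: an approximate interpolant would only perturb $s_{vz_i}$ slightly, and the margin of $1$ on either side of zero would absorb that. With the exact construction, however, no approximation argument is needed.
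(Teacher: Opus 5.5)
Your proof is correct if $\alpha$ may be fixed as part of the configuration; the paper itself fixes the hyperparameters $M=K$ and $d_\text{dot}=M$. It differs from the paper's route in two places.

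First, you set $\alpha=0$ so that every $\texttt{logsoftmax}$ term vanishes. The paper instead keeps $\alpha\geq0$ arbitrary and overpowers the penalty. For the ``only if'' direction, it makes non-member relevance scores negative, and nonpositivity of $\texttt{logsoftmax}$ keeps every adjusted score negative. For the ``if'' direction, it pushes member scores above $t=\alpha\log\bigl((M|\sV|-1)e^{2\epsilon/\sqrt{d_\text{dot}}}+1\bigr)+\log|\sV|$. This yields $\bar{s}_{vz_i}>\log|\sV|$, hence selection through the $\texttt{logmeanexp}$ bound and connection through $\tilde{s}_{vz_i}\geq\bar{s}_{vz_i}$.

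Second, you realize the MLP by exact ReLU interpolation (a generic projection plus a piecewise-linear fit). The paper invokes the universal approximation theorem and carries an $\epsilon$-error through all the bounds.

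Your interpolation is more elementary and removes the $\epsilon$ bookkeeping. The paper's treatment of $\alpha$ buys the more meaningful statement: \ours with its score adjustment active, and not just the ablated variant (v) of Table~\ref{tb:abl}, can realize any node-VN pattern.

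Your construction reaches that stronger statement at no extra cost. Since all relevance scores are at most $C$, $\texttt{logsoftmax}(\sS)[C]\geq-\log(Kn)$. Choosing $C\geq\log n+\alpha\log(Kn)$ then gives $\bar{s}_{vz_i}\geq\log n$ for every $v\in\sV_i$, while every non-member keeps $\tilde{s}_{vz_i}\leq s_{vz_i}=-1$.

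Finally, the assumption $\sV_i\neq\emptyset$ is unnecessary. For an empty $\sV_i$ your scores are all $-1$, so $z_i$ is simply not selected, which is exactly how the paper handles that case.
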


The proof of Theorem~\ref{thm:flexours} is in Appendix~\ref{app:prf}, derived by using the universal approximation theorem for MLPs~\cite{univ}. Note that node representations can be distinguished by incorporating an injective positional encoding~\cite{guniv}. Theorem~\ref{thm:flexours} confirms that \ours can generate all necessary message passing paths simultaneously at each layer. Furthermore, Corollary~\ref{cor:gen} shows that \ours can simulate any message passing path construction strategies that connect VNs and nodes, including those employed by existing VN-based methods such as \iprmpnn~\cite{iprmpnn} and \nsquare~\cite{nsquare}.

\begin{corollary}
\label{cor:gen}
Any graph structure obtained by connecting VNs to nodes in the original graph, such as those produced by \iprmpnn~\cite{iprmpnn} or \nsquare~\cite{nsquare}, can be simulated by a single-layer \ours.
\end{corollary}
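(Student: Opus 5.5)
The plan is to obtain Corollary~\ref{cor:gen} as a direct specialization of Theorem~\ref{thm:flexours}. First I would fix what ``a graph structure obtained by connecting VNs to nodes'' means. It is an augmented graph $G'$ on $\sV\cup\{z_1,\dots,z_K\}$ whose new edges are node-VN pairs $(v,z_i)$, plus possibly VN-VN edges. For each $i$ I would then set $\sV_i=\{v\in\sV \mid (v,z_i)\in G'\}$.

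For \nsquare, every VN is joined to every node, so $\sV_i=\sV$ for all $i$; its per-layer edge weights are a reweighting of messages over this fixed complete bipartite structure, not a different structure. For \iprmpnn, each node $v$ lies in exactly the $k$ sets $\sV_i$ of the VNs it samples. I would first normalize $G'$ by discarding any VN with $\sV_i=\emptyset$. Such a VN is either isolated or attached only to other VNs, and in a single-layer construction it carries no message to the original nodes. The remaining sets are all nonempty, which matches the guarantee in \S~\ref{subsec:w2vn} that every selected VN connects to at least one node.

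Next I would meet the hypotheses of Theorem~\ref{thm:flexours}. The budget is chosen as $M\geq K$, with any unused candidates switched off. This is done by giving their keys $\vk_z^{(1)}$ large negative alignment with all MLP outputs, so that $\texttt{sigmoid}(s_z^{(1)})<0.5$ and they are never selected. Node representations are made pairwise distinct by concatenating an injective positional encoding, as remarked after the theorem. Theorem~\ref{thm:flexours} then yields a single-layer \ours in which $z_i$ is selected and connected to exactly $\sV_i$, so the node-VN edge set equals that of $G'$.

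The main obstacle I expect is the VN-VN edges of \S~\ref{subsec:vnvn}, which the theorem does not control. These must be made to coincide with those of $G'$; for \iprmpnn and \nsquare there are none. My first attempt is to set the gate $\boldsymbol{\gamma}^{(1)}=\mathbf{1}$, so that each $\vx_{z_i}^{(0)}=\vq_{z_i}^{(1)}$ is a free learnable seed. I would then choose the seeds, whose inputs the MLP never sees on actual nodes, so that the MLP maps them to vectors with pairwise strongly negative dot products. The natural candidates are vectors of simplex or antipodal type, and universal approximation on the finite input set allows this. Every $\bar{s}^{(1)}_{zu}$ is then negative because the \texttt{logsoftmax} penalty is nonpositive, so no VN-VN edges are formed. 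If VN-VN edges are wanted, I would instead choose the seed images to realize the desired sign pattern of pairwise scores.

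The step needing care is that the MLP simultaneously has to realize the node-side scores from the theorem's construction, taking large $\alpha$-adjusted scores into account. I would handle this by placing the seeds at inputs distinct from all node representations and applying universal approximation on the finite union of node representations and seeds.
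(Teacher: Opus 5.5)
Your proposal is correct and follows the paper's proof, which is the one-line reduction to Theorem~\ref{thm:flexours} with $\sV_i$ taken as the set of nodes attached to the $i$-th VN. Your additional bookkeeping (distinct representations via positional encodings, padding the budget $M$ with empty sets or switched-off keys, dropping VNs with $\sV_i=\emptyset$, and fixing the VN-VN edges of \S~\ref{subsec:vnvn} through the seeds with $\boldsymbol{\gamma}^{(1)}=\mathbf{1}$) makes explicit side conditions the paper leaves implicit. One factual slip: the paper states that \iprmpnn and \nsquare enforce fully connected VNs, so for them you need your ``VN-VN edges wanted'' branch, not the no-edge one. That branch is easy: use one common seed $\vq$ for all VNs, so every pairwise score equals $c=\|\text{MLP}^{(1)}(\vq)\|^2/\sqrt{d_\text{dot}}$ and every $\bar{s}_{zu}^{(1)}=c-\alpha\log(K-1)$, which is positive once the MLP output at $\vq$ has large enough norm.
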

This follows directly from Theorem~\ref{thm:flexours} by defining {$\sV_i$} as the set of nodes connected to the {$i$}-th VN. 

\subsection{Mitigating Under-reaching and Over-squashing via \textbf{\ours}}
To show that \ours can alleviate under-reaching and over-reaching, we conduct experiments on synthetic datasets, using GCN~\cite{gcn} as the backbone MPNN. In Appendix~\ref{app:prop}, we show that \ours does not introduce \textbf{over-smoothing}, evidenced by the Dirichlet energy~\cite{ovsmsurv} of node representations computed by \ours at each layer.

\paragraph{Under-reaching} \ours can address under-reaching by introducing VNs that create shortcuts between distant nodes, enabling information flow beyond the {$L$}-hop neighbors with {$L$}-layer MPNN. We empirically validate it using the \synunder~\cite{prmpnn} (\treelc) dataset, where each node has a binary label, and the task is to predict the number of leaf nodes labeled `1' using only the root node's representation in a single-layer setup. Table~\ref{tb:syn} shows that \ours achieves the perfect accuracy across varying tree depths, whereas the backbone MPNN, GCN, performs at the level of random chance. Figure~\ref{fig:undr} illustrates that \ours connects a VN to the root and all the leaf nodes to mitigate under-reaching.

\paragraph{Over-squashing} Over-squashing occurs when information from distant nodes is overly compressed due to a limited number of paths between them~\cite{ovsqc, ovsqmpnn}. \ours can alleviate over-squashing by introducing additional paths between such node pairs using VNs. We empirically validate this claim on the \synsquash~\cite{ovsq} (\treenb) dataset, where one should predict the value label of a leaf node that shares the same key label as the root node, using the representation of the root node. As shown in Table~\ref{tb:syn}, while GCN fails, \ours achieves the perfect accuracy across trees of depth 4 to 6. Figure~\ref{fig:ovsq} shows that \ours learns to introduce multiple paths between a pair of nodes via VNs to effectively mitigate the over-squashing issue. In Appendix~\ref{app:prop}, we also provide the total effective resistance values~\cite{egr} to further confirm that \ours successfully resolves over-squashing on real-world datasets.

\begin{table*}[t]
\centering
\caption{Experimental results on four LRGB datasets. We report AP(\%) for \pepfunc, MAE for \pepstruct, and F1-macro(\%) for \pascal and \coco. The results are from the original papers or~\cite{lrgbexp} (marked by $^*$).}
\setlength{\tabcolsep}{1mm}
\setlength\fboxsep{1pt}
\begin{tabular}{lllllc}
\toprule
     & \pepfunc($\uparrow$) & \pepstruct($\downarrow$) & \pascal($\uparrow$) & \coco($\uparrow$) \\
    \cmidrule(lr){1-5}
    \drew~\cite{drew} & 71.50\scalebox{0.6}{\color{gray} $\pm$0.44} & 0.2536\scalebox{0.6}{\color{gray} $\pm$0.0015} & 33.14\scalebox{0.6}{\color{gray} $\pm$0.24} & - \\
    \laser~\cite{laser} & 64.89\scalebox{0.6}{\color{gray} $\pm$0.74} & 0.2971\scalebox{0.6}{\color{gray} $\pm$0.0037} & - & - \\
    \prmpnn~\cite{prmpnn} & 68.25\scalebox{0.6}{\color{gray} $\pm$0.86} & 0.2477\scalebox{0.6}{\color{gray} $\pm$0.0005} & - & - \\
     \megraph~\cite{megraph} & 69.45\scalebox{0.6}{\color{gray} $\pm$0.77} & 0.2507\scalebox{0.6}{\color{gray} $\pm$0.0009} & - & - \\
     MPNN+VN~\cite{guniv} & 68.23\scalebox{0.6}{\color{gray} $\pm$0.69} & 0.2475\scalebox{0.6}{\color{gray} $\pm$0.0018} & \textit{44.77}\scalebox{0.6}{\color{gray} $\pm$1.37} & 32.44\scalebox{0.6}{\color{gray} $\pm$0.25} \\
     MPNN+VN$_G$~\cite{vntheory}  & 68.22\scalebox{0.6}{\color{gray} $\pm$0.52} & 0.2458\scalebox{0.6}{\color{gray} $\pm$0.0006} & - & - \\
     \iprmpnn~\cite{iprmpnn} & \textit{72.10}\scalebox{0.6}{\color{gray} $\pm$0.39} & \underline{0.2422}\scalebox{0.6}{\color{gray} $\pm$0.0007} & - & - \\
     \geaet~\cite{geaet} & - & 0.2445\scalebox{0.6}{\color{gray} $\pm$0.0013} & \underline{45.85}\scalebox{0.6}{\color{gray} $\pm$0.87} & \textbf{38.95}\scalebox{0.6}{\color{gray} $\pm$0.50} \\
      \stgcn~\cite{stgcn} & \textbf{73.11}\scalebox{0.6}{\color{gray} $\pm$0.66} & 0.2447\scalebox{0.6}{\color{gray} $\pm$0.0032} & - & - \\
    $\text{\gps}^*$~\cite{gps} & 65.34\scalebox{0.6}{\color{gray} $\pm$0.91} & 0.2509\scalebox{0.6}{\color{gray} $\pm$0.0014} & 44.40\scalebox{0.6}{\color{gray} $\pm$0.65} & \underline{38.84}\scalebox{0.6}{\color{gray} $\pm$0.55} \\
    \exphormer~\cite{exphormer} & 65.27\scalebox{0.6}{\color{gray} $\pm$0.43} & 0.2481\scalebox{0.6}{\color{gray} $\pm$0.0007} & 39.75\scalebox{0.6}{\color{gray} $\pm$0.37} & 34.55\scalebox{0.6}{\color{gray} $\pm$0.09} \\
    \grit~\cite{grit} & 69.88\scalebox{0.6}{\color{gray} $\pm$0.82} & 0.2460\scalebox{0.6}{\color{gray} $\pm$0.0012} & - & - \\
    \gvit~\cite{gvitmixer} & 69.42\scalebox{0.6}{\color{gray} $\pm$0.75} & 0.2449\scalebox{0.6}{\color{gray} $\pm$0.0016} & - & -\\
     \amp~\cite{amp} & 71.61\scalebox{0.6}{\color{gray} $\pm$0.47} & 0.2446\scalebox{0.6}{\color{gray} $\pm$0.0026} & - & - \\
     \cognn~\cite{cognn} & 69.90\scalebox{0.6}{\color{gray} $\pm$0.93} & - & - & - \\
     \nba~\cite{nba} & 72.07\scalebox{0.6}{\color{gray} $\pm$0.28} & \textit{0.2424}\scalebox{0.6}{\color{gray} $\pm$0.0010} & 39.69\scalebox{0.6}{\color{gray} $\pm$0.27} & - \\
     \uniconv~\cite{uniconv} & 71.73\scalebox{0.6}{\color{gray} $\pm$0.61} & 0.2425\scalebox{0.6}{\color{gray} $\pm$0.0009} & 40.05\scalebox{0.6}{\color{gray} $\pm$0.67} & 31.53\scalebox{0.6}{\color{gray} $\pm$0.35} \\
    \cmidrule(lr){1-5}
     $\text{\gcn}^*$~\cite{gcn} & 68.60\scalebox{0.6}{\color{gray} $\pm$0.50} & 0.2460\scalebox{0.6}{\color{gray} $\pm$0.0007} & 20.78\scalebox{0.6}{\color{gray} $\pm$0.31} & 13.38\scalebox{0.6}{\color{gray} $\pm$0.07} \\
     $\text{\gine}^*$~\cite{gine} & 66.21\scalebox{0.6}{\color{gray} $\pm$0.67} & 0.2473\scalebox{0.6}{\color{gray} $\pm$0.0017} & 27.18\scalebox{0.6}{\color{gray} $\pm$0.54} & 21.25\scalebox{0.6}{\color{gray} $\pm$0.09} \\
    $\text{\gatedgcn}^*$~\cite{gatedgcn} & 67.65\scalebox{0.6}{\color{gray} $\pm$0.47} & 0.2477\scalebox{0.6}{\color{gray} $\pm$0.0009} & 38.80\scalebox{0.6}{\color{gray} $\pm$0.40} & 29.22\scalebox{0.6}{\color{gray} $\pm$0.18} & avg. impr.\\
    \cmidrule(lr){1-5} \cmidrule(lr){6-6}
    \ours-\gcn & \underline{72.38}\scalebox{0.6}{\color{gray} $\pm$0.52} \colorbox{lightgray}{(+5.5\%)} & 0.2432\scalebox{0.6}{\color{gray} $\pm$0.0014} \colorbox{lightgray}{(+1.2\%)} & 30.44\scalebox{0.6}{\color{gray} $\pm$0.35} \colorbox{lightgray}{(+46.5\%)} & 18.72\scalebox{0.6}{\color{gray} $\pm$1.03} \colorbox{lightgray}{(+39.9\%)} & +23.3\%\\
     \ours-\gine & 68.97\scalebox{0.6}{\color{gray} $\pm$0.50} \colorbox{lightgray}{(+4.1\%)} & 0.2453\scalebox{0.6}{\color{gray} $\pm$0.0016} \colorbox{lightgray}{(+0.8\%)} & 35.94\scalebox{0.6}{\color{gray} $\pm$0.40} \colorbox{lightgray}{(+32.2\%)} & 24.43\scalebox{0.6}{\color{gray} $\pm$0.34} \colorbox{lightgray}{(+15.0\%)} & +13.0\% \\
     \ours-\gatedgcn & 70.70\scalebox{0.6}{\color{gray} $\pm$0.28} \colorbox{lightgray}{(+4.5\%)} & \textbf{0.2410}\scalebox{0.6}{\color{gray} $\pm$0.0007} \colorbox{lightgray}{(+2.7\%)} & \textbf{48.28}\scalebox{0.6}{\color{gray} $\pm$0.54} \colorbox{lightgray}{(+24.4\%)} & \textit{37.41}\scalebox{0.6}{\color{gray} $\pm$0.49} \colorbox{lightgray}{(+28.0\%)} & +14.9\%\\
    \bottomrule
\end{tabular}
\label{tb:mainexp_lrgb}
\end{table*}

\section{Experiments}
\label{sec:exp}
On nine real-world datasets, we compare \ours with state-of-the-art methods, including graph rewiring methods, graph transformers, and VN-based methods. The best performance is \textbf{boldfaced}, the second-best is \underline{underlined}, and the third-best is \textit{italicized}. ``-'' denotes that the results are unavailable from the baselines' original papers. We report the mean and the standard deviation of the performance. The relative improvement of \ours over each backbone MPNN is reported in {\setlength\fboxsep{1pt}\colorbox{lightgray}{highlighted}} parentheses. The “avg. impr.” is the average improvement by \ours for each backbone across all datasets. Additional experimental and dataset details are in Appendix~\ref{app:exp} and Appendix~\ref{app:data}, respectively.

\subsection{Performance on Benchmark Graph Datasets}

\paragraph{Performance on LRGB} We use four multi-graph datasets from the Long Range Graph Benchmark (LRGB)~\cite{lrgb}: \pepfunc for graph classification, \pepstruct for graph regression, and \coco and \pascal for node classification. All evaluations are conducted under an inductive setting, where the training and test graphs are disjoint. We follow the standard evaluation setting~\cite{lrgb}. We adopt \gcn~\cite{gcn}, \gine~\cite{gine}, and \gatedgcn~\cite{gatedgcn} as \ours's backbone MPNNs, as they are commonly used in this benchmark. As shown in Table~\ref{tb:mainexp_lrgb}, \ours outperforms all baselines on \pepstruct and \pascal, while achieving second-best on \pepfunc and third-best on \coco. When scoping down to MPNN-based methods, \ours performs best on all datasets except \pepfunc. Across all datasets, \ours consistently improves the corresponding backbone's performance: \gcn by 23.3\%, \gine by 13.0\%, and \gatedgcn by 14.9\% on average, resulting in an overall average improvement of 17.1\%. These results suggest that \ours can generate message passing paths that facilitate long-range interactions.

\begin{table*}[t]
\centering
\caption{Experimental results on heterophilic graphs. We report accuracy({$\%$}) for \empire and \amazon and AUC-ROC({$\%$}) for \minesweeper, \tolokers, and \questions. The backbones' results are reproduced; the rest are from the original papers.}
\setlength{\tabcolsep}{1mm}
\setlength\fboxsep{1pt}
\begin{tabular}{llllllc}
    \toprule
     &\empire($\uparrow$) & \amazon($\uparrow$) & \minesweeper($\uparrow$) & \tolokers($\uparrow$) & \questions($\uparrow$) \\
    \cmidrule(lr){1-6}
    \comfy~\cite{comfy} & 79.53\scalebox{0.6}{\color{gray} $\pm$0.70} & 49.45\scalebox{0.6}{\color{gray} $\pm$0.70} & 89.76\scalebox{0.6}{\color{gray} $\pm$0.50} & - & - \\
    \jdr~\cite{jdr} & 78.86\scalebox{0.6}{\color{gray} $\pm$0.48} & 46.47\scalebox{0.6}{\color{gray} $\pm$0.67} & 90.01\scalebox{0.6}{\color{gray} $\pm$0.32} & 84.73\scalebox{0.6}{\color{gray} $\pm$0.45} & 77.52\scalebox{0.6}{\color{gray} $\pm$0.63} \\
     \iprmpnn~\cite{iprmpnn} & 83.90\scalebox{0.6}{\color{gray} $\pm$0.60} & 48.00\scalebox{0.6}{\color{gray} $\pm$0.70} & 88.70\scalebox{0.6}{\color{gray} $\pm$0.60} & 82.00\scalebox{0.6}{\color{gray} $\pm$0.80} & - \\
     \nsquare~\cite{nsquare} & - & 50.25\scalebox{0.6}{\color{gray} $\pm$0.53} & 93.97\scalebox{0.6}{\color{gray} $\pm$0.27} & \textit{86.25}\scalebox{0.6}{\color{gray} $\pm$0.41} & 78.07\scalebox{0.6}{\color{gray} $\pm$0.63} \\
    \polyformer~\cite{polyformer} & 80.27\scalebox{0.6}{\color{gray} $\pm$0.39} & - & 92.02\scalebox{0.6}{\color{gray} $\pm$0.32} & 84.32\scalebox{0.6}{\color{gray} $\pm$0.59} & 78.32\scalebox{0.6}{\color{gray} $\pm$0.67} \\
    \polynormer~\cite{polynormer} & \textbf{92.55}\scalebox{0.6}{\color{gray} $\pm$0.37} & 54.81\scalebox{0.6}{\color{gray} $\pm$0.49} & 97.46\scalebox{0.6}{\color{gray} $\pm$0.36} & 85.91\scalebox{0.6}{\color{gray} $\pm$0.74} & 78.92\scalebox{0.6}{\color{gray} $\pm$0.89} \\
    \cognn~\cite{cognn} & \textit{91.57}\scalebox{0.6}{\color{gray} $\pm$0.32} & 54.17\scalebox{0.6}{\color{gray} $\pm$0.37} & 97.31\scalebox{0.6}{\color{gray} $\pm$0.41} & 84.45\scalebox{0.6}{\color{gray} $\pm$1.17} & \textbf{80.02}\scalebox{0.6}{\color{gray} $\pm$0.86} \\
    \revgnn~\cite{revgnn} & 86.43\scalebox{0.6}{\color{gray} $\pm$0.74} & 52.75\scalebox{0.6}{\color{gray} $\pm$0.62} & 96.05\scalebox{0.6}{\color{gray} $\pm$0.19} & 86.08\scalebox{0.6}{\color{gray} $\pm$0.84} & 77.96\scalebox{0.6}{\color{gray} $\pm$0.96} \\
    \uniconv~\cite{uniconv} & 87.21\scalebox{0.6}{\color{gray} $\pm$0.76} & \underline{55.34}\scalebox{0.6}{\color{gray} $\pm$0.74} & 96.11\scalebox{0.6}{\color{gray} $\pm$0.10} & 85.18\scalebox{0.6}{\color{gray} $\pm$0.43} & \underline{80.01}\scalebox{0.6}{\color{gray} $\pm$0.43} \\
    \cmidrule(lr){1-6}
    \gcn~\cite{gcn} & 90.81\scalebox{0.6}{\color{gray}$\pm$0.37} & 53.38\scalebox{0.6}{\color{gray}$\pm$0.59} & 97.10\scalebox{0.6}{\color{gray}$\pm$0.38} & \underline{86.30}\scalebox{0.6}{\color{gray}$\pm$0.56}& 78.49\scalebox{0.6}{\color{gray}$\pm$1.19} \\
    \gat~\cite{gat} & 90.39\scalebox{0.6}{\color{gray}$\pm$0.36} & \textit{55.12}\scalebox{0.6}{\color{gray}$\pm$0.37} & \underline{98.26}\scalebox{0.6}{\color{gray}$\pm$0.22} & 85.66\scalebox{0.6}{\color{gray}$\pm$0.52} & 77.91\scalebox{0.6}{\color{gray}$\pm$1.07}  \\
     \sage~\cite{gsage} & 89.81\scalebox{0.6}{\color{gray}$\pm$0.35} & 55.04\scalebox{0.6}{\color{gray}$\pm$0.43} & \textit{98.12}\scalebox{0.6}{\color{gray}$\pm$0.31} & 84.82\scalebox{0.6}{\color{gray}$\pm$0.66} & 77.42\scalebox{0.6}{\color{gray}$\pm$0.88} & avg. impr. \\
    \cmidrule(lr){1-6} \cmidrule(lr){7-7}
     \ours-\gcn & 91.64\scalebox{0.6}{\color{gray}$\pm$0.38} \colorbox{lightgray}{(+0.9\%)} & 53.94\scalebox{0.6}{\color{gray}$\pm$0.36} \colorbox{lightgray}{(+1.0\%)} & 97.90\scalebox{0.6}{\color{gray}$\pm$0.34} \colorbox{lightgray}{(+0.8\%)} & 86.64\scalebox{0.6}{\color{gray}$\pm$0.51} \colorbox{lightgray}{(+0.4\%)} & \textit{79.12}\scalebox{0.6}{\color{gray}$\pm$1.08} \colorbox{lightgray}{(+0.7\%)} & +0.8\%\\
     \ours-\gat & \underline{92.30}\scalebox{0.6}{\color{gray}$\pm$0.32} \colorbox{lightgray}{(+2.1\%)} & 55.31\scalebox{0.6}{\color{gray}$\pm$0.37} \colorbox{lightgray}{(+0.3\%)} & 98.70\scalebox{0.6}{\color{gray}$\pm$0.21} \colorbox{lightgray}{(+0.4\%)} & \textbf{86.73}\scalebox{0.6}{\color{gray}$\pm$0.43} \colorbox{lightgray}{(+1.2\%)} & 78.96\scalebox{0.6}{\color{gray}$\pm$0.96} \colorbox{lightgray}{(+1.3\%)} & +1.1\%\\
     \ours-\sage & 91.03\scalebox{0.6}{\color{gray}$\pm$0.44} \colorbox{lightgray}{(+1.4\%)} & \textbf{55.63}\scalebox{0.6}{\color{gray}$\pm$0.36} \colorbox{lightgray}{(+1.1\%)} & \textbf{98.82}\scalebox{0.6}{\color{gray}$\pm$0.37} \colorbox{lightgray}{(+0.7\%)}& 85.59\scalebox{0.6}{\color{gray}$\pm$0.38} \colorbox{lightgray}{(+0.9\%)} & 77.78\scalebox{0.6}{\color{gray}$\pm$0.88} \colorbox{lightgray}{(+0.5\%)} & +0.9\%\\
    \bottomrule
\end{tabular}
\label{tb:mainexp_heterophilic}
\end{table*}

\begin{table*}[t]
\centering
\setlength\fboxsep{1pt}
\setlength{\fboxrule}{1pt}
\caption{p-values obtained by comparing the performance of \ours and its corresponding backbone MPNN using one-tailed paired t-tests on heterophilic graphs. A p-value$<$0.05 ({\fcolorbox{c1}{white}{boxed}}) indicates that \ours's improvement over its backbone MPNN is statistically significant.}
\setlength{\tabcolsep}{1mm}
\begin{tabular}{cccccc}
    \toprule
     &\empire & \amazon & \minesweeper & \tolokers & \questions \\
    \midrule
     \ours-\gcn & {\fcolorbox{c1}{white}{2.3e-5}} & {\fcolorbox{c1}{white}{1.8e-3}} & {\fcolorbox{c1}{white}{5.0e-6}} & {\fcolorbox{c1}{white}{1.8e-2}} & {\fcolorbox{c1}{white}{4.4e-4}}\\
     \ours-\gat & {\fcolorbox{c1}{white}{3.5e-10}} & 8.8e-2 & {\fcolorbox{c1}{white}{1.3e-4}} & {\fcolorbox{c1}{white}{1.7e-5}} & {\fcolorbox{c1}{white}{1.2e-4}}\\
     \ours-\sage & {\fcolorbox{c1}{white}{7.1e-6}} & {\fcolorbox{c1}{white}{1.6e-4}} & {\fcolorbox{c1}{white}{2.0e-4}} & {\fcolorbox{c1}{white}{2.9e-3}} & {\fcolorbox{c1}{white}{9.4e-3}}\\
    \bottomrule
\end{tabular}
\label{tb:ttest}

\bigskip
\color{black}
\centering
\caption{Comparison of parameter count, runtime per epoch, and performance on \pascal and \tolokers datasets. \ours-GGCN denotes \ours-\gatedgcn. \ours achieves the best performance with a competitive model size and runtime.}
\setlength{\tabcolsep}{0.9mm}
\begin{tabular}{ccccccccccc}
    \toprule
    & \multicolumn{6}{c}{\pascal} & \multicolumn{4}{c}{\tolokers} \\
    \cmidrule(lr){2-7}\cmidrule(lr){8-11}
     & \geaet & \gps & \nba & \uniconv &\gatedgcn & \ours-GGCN &\iprmpnn & \nsquare &\gat & \ours-\gat \\
    \midrule
    \# params & 506,213 & 500,613 & 486,517 & 367,934 & 472,646 & 491,521 & 7,412,501 & 406,776 & 27,681 & 97,137 \\
     time (s/ep.) & 88.0 & 29.5 & 177.5 & 240.8 & 24.8 & 63.6 & 0.28 & 0.63 & 0.20 & 0.42 \\
     performance & 45.85\scalebox{0.6}{\color{gray} $\pm$0.87} & 44.40\scalebox{0.6}{\color{gray} $\pm$0.65} & 39.69\scalebox{0.6}{\color{gray} $\pm$0.27} & 40.05\scalebox{0.6}{\color{gray} $\pm$0.67} & 38.80\scalebox{0.6}{\color{gray} $\pm$0.40} & \textbf{48.28}\scalebox{0.6}{\color{gray} $\pm$0.54} & 82.00\scalebox{0.6}{\color{gray} $\pm$0.80} & 86.25\scalebox{0.6}{\color{gray} $\pm$0.41} & 85.66\scalebox{0.6}{\color{gray} $\pm$0.52} & \textbf{86.73}\scalebox{0.6}{\color{gray}$\pm$0.43} \\
    \bottomrule
\end{tabular}
\label{tb:efc}
\end{table*}

\paragraph{Performance on Heterophilic Graph Datasets} We also evaluate \ours using five heterophilic graph datasets from~\cite{hetegraph}: \empire, \amazon, \minesweeper, \tolokers, and \questions, each consisting of a single graph. The task is transductive node classification, i.e., test nodes are visible during training while their labels remain unknown. Following the standard evaluation protocol~\cite{hetegraph}, we report the mean and standard deviation across ten different splits. We adopt \gcn~\cite{gcn}, \gat~\cite{gat}, and \sage~\cite{gsage} as \ours's backbone MPNNs as they are commonly used in these datasets. We reproduced the backbones' results with additional hyperparameter tuning, including settings reported in~\cite{hetegraphexp}. Note that the results reported in~\cite{hetegraphexp} use only three of ten splits, making direct comparison infeasible.

Table~\ref{tb:mainexp_heterophilic} shows that \ours performs best on \amazon, \minesweeper, and \tolokers, and ranks second on \empire and third on \questions. When focusing on MPNN-based methods, \ours outperforms all baselines on all datasets except \questions. Also, \ours consistently improves the performance of all backbones: \gcn by 0.8\%, \gat by 1.1\%, and \sage by 0.9\% on average, yielding a total average improvement of 0.9\%. While the improvement is smaller than that on the LRGB, it can be attributed to the relatively strong performance of the backbones.

To verify that the performance improvement of \ours over its backbone MPNN is statistically significant, we conduct one-tailed paired t-tests comparing the performance of \ours with its corresponding backbone MPNN. Since \ours is evaluated with three backbones on each of the five datasets, we perform 15 tests in total. A p-value below 0.05 indicates that the improvement of \ours is statistically significant. The resulting p-values are reported in Table~\ref{tb:ttest}. \ours yields statistically significant improvement over the corresponding backbone in 14 out of 15 cases{\setlength\fboxsep{1pt} \setlength{\fboxrule}{1pt} \fcolorbox{c1}{white}{(boxed)}}. The only exception is \ours-\gat on \amazon, where the p-value slightly exceeds 0.05. These results confirm that \ours consistently improves the performance of its corresponding backbone MPNN on heterophilic graph datasets.
%We conducted one-tailed paired t-tests to evaluate whether \ours's improvement over its backbone MPNN is statistically significant. We observed that \ours yields statistically significant improvements (p-value$<$0.05) in 14 out of 15 cases, which are detailed in Appendix D.

\paragraph{Efficiency Comparison} Table~\ref{tb:efc} presents the number of parameters, runtime per epoch, and performance on \pascal and \tolokers. We report F1-macro(\%) for \pascal and AUC-ROC(\%) for \tolokers. On \pascal, \ours is slower than \gps and its backbone, \gatedgcn, but faster than all the other methods, while achieving the best performance. On \tolokers, \ours outperforms \iprmpnn and \nsquare with significantly fewer parameters. It runs slower than \iprmpnn and the backbone, \gat, but faster than \nsquare. \ours achieves state-of-the-art performance with a competitive model size and runtime.

\subsection{Ablation Studies and Qualitative Analysis}
\label{subsec:ablqual}

\begin{table}[t]
\centering
\caption{Ablation studies of \ours. We report AP for Pepfunc, F1-macro for Pascal, and AUC-ROC for Mine in \%.}
\setlength{\tabcolsep}{1mm}
\begin{tabular}{lllll}
    \toprule
     & Pepfunc ($\uparrow$) & Pascal ($\uparrow$) & Mine ($\uparrow$) \\
    \midrule
     (i) {$G^{(1)}=\cdots=G^{(L)}$} & 71.62\scalebox{0.6}{\color{gray}$\pm$0.41} & 47.52\scalebox{0.6}{\color{gray}$\pm$0.37} & 97.88\scalebox{0.6}{\color{gray}$\pm$0.43} \\
     (ii) all N-VN & 71.23\scalebox{0.6}{\color{gray}$\pm$0.61} & 40.21\scalebox{0.6}{\color{gray}$\pm$0.30} & 91.13\scalebox{0.6}{\color{gray}$\pm$2.83} \\
     (iii) {$\beta^{(l)}=0.0$} & 71.86\scalebox{0.6}{\color{gray}$\pm$0.44} & 47.42\scalebox{0.6}{\color{gray}$\pm$0.68} & 98.54\scalebox{0.6}{\color{gray}$\pm$0.35} \\
     (iv) {$\beta^{(l)}=1.0$} & 72.06\scalebox{0.6}{\color{gray}$\pm$0.46} & 47.19\scalebox{0.6}{\color{gray}$\pm$0.63} & 98.62\scalebox{0.6}{\color{gray}$\pm$0.32} \\
     (v) {$\alpha=0.0$} & 71.74\scalebox{0.6}{\color{gray}$\pm$0.45} & 40.02\scalebox{0.6}{\color{gray}$\pm$0.60} & 98.36\scalebox{0.6}{\color{gray}$\pm$0.40} \\
     (vi) {$\vx_z^{(l-1)}=\vq_z^{(l)}$} & 71.68\scalebox{0.6}{\color{gray}$\pm$0.49} & 45.48\scalebox{0.6}{\color{gray}$\pm$1.22} & 98.21\scalebox{0.6}{\color{gray}$\pm$0.38} \\
     (vii) all VN-VN & 72.02\scalebox{0.6}{\color{gray}$\pm$0.38} & 47.48\scalebox{0.6}{\color{gray}$\pm$0.52} & 98.47\scalebox{0.6}{\color{gray}$\pm$0.40} \\
    \midrule
     \ours & \textbf{72.38}\scalebox{0.6}{\color{gray}$\pm$0.52} & \textbf{48.28}\scalebox{0.6}{\color{gray}$\pm$0.54} & \textbf{98.82}\scalebox{0.6}{\color{gray}$\pm$0.37} \\
    \bottomrule
\end{tabular}
\label{tb:abl}
\end{table}

\paragraph{Ablation Studies} Table~\ref{tb:abl} presents the ablation studies of \ours on three datasets with different task types and evaluation settings: \pepfunc (Pepfunc) is a multi-graph dataset for inductive graph classification, \pascal (Pascal) is a multi-graph dataset for inductive node classification, and \minesweeper (Mine) is a single-graph dataset for transductive node classification.
\begin{enumerate}[label=\textbf{(\roman*)}]
\item introduces new VNs and connects them to nodes only at the first layer, keeping the graph fixed for subsequent layers. This consistently degrades performance on all three datasets, indicating that allowing VNs to be introduced at different layers is crucial for handling different tasks and settings.
\item connects all nodes to all selected VNs at each layer. This significantly degrades performance on all datasets, showing that fully connecting all nodes and VNs may result in suboptimal message passing. These results highlight the importance of selectively connecting nodes and VNs.
\item utilizes only the node-level preference scores $\tilde{s}_{vz|v}^{(l)}$, removing the VN-level preference scores $\tilde{s}_{vz|z}^{(l)}$ to compute the final edge scores $\tilde{s}_{vz}^{(l)}$ by setting $\beta^{(l)}=0$ in Section~\ref{subsec:w2vn}. This variation degrades performance, with a more pronounced drop on \pepfunc and \pascal than on \minesweeper. This is because, in the transductive single-graph setting, the connectivity pattern between nodes and VNs remains consistent between training and evaluation, making the preference scores less critical. In contrast, in the inductive multi-graph setting, the connections between nodes and VNs vary across graphs, and thus preference scores are essential for properly forming connections in new graphs.
\item utilizes only the VN-level preference scores $\tilde{s}_{vz|z}^{(l)}$, removing the node-level preference scores $\tilde{s}_{vz|v}^{(l)}$ to compute the final edge scores $\tilde{s}_{vz}^{(l)}$ by setting $\beta^{(l)}=1$ in Section~\ref{subsec:w2vn}. Similar to \textbf{(iii)}, this variation has a greater impact in the inductive setting than in the transductive setting.
\item uses only the original relevance scores for VN selections, node-VN connections, and VN-VN connections, removing the relative significance of these scores by setting $\alpha=0$ in Section~\ref{subsec:w2vn} and~\ref{subsec:vnvn}, thereby ignoring the $\texttt{logsoftmax}$ terms. In this variation, a node and a VN are connected solely based on their representations without accounting for the overall connectivity pattern. Ignoring the relative significance of the relevance scores in the VN selection and node-VN and VN-VN connection processes makes the model less adaptable to unseen graphs, which leads to a significant performance drop in the inductive setting, while its impact is less pronounced in the transductive setting, where the connectivity pattern remains fixed between training and evaluation.
\item solely learns VN representations instead of aggregating the representations of neighboring nodes by setting $\boldsymbol{\gamma}^{(l)}=\mathbf{1}_{d_{l-1}}$ in Section~\ref{subsec:vnrep}. Since VN representations are consistent between training and evaluation in the transductive setting, this component is more critical in the inductive setting.
\item fully connects all newly introduced VNs. Although this design is more critical for node-level tasks than graph-level tasks, it shows that the selective connectivity between VNs is needed for effective message passing.
\end{enumerate}
Overall, we note that \textbf{(i)} and \textbf{(ii)} impact all tasks and evaluation settings. On the other hand, \textbf{(iii)}-\textbf{(vi)} are more critical in the inductive multi-graph setting than in the transductive single-graph setting, while \textbf{(vii)} is more important for node-level tasks than for graph-level tasks. These results demonstrate that although the role of each component may vary depending on the tasks and evaluation settings, every component of \ours is essential.
%Table~\ref{tb:abl} presents ablation studies of \ours on \pepfunc(Pepfunc), \pascal(Pascal), and \minesweeper(Mine). These datasets cover both graph-level (\pepfunc) and node-level (\pascal, \minesweeper) tasks across both inductive (\pepfunc, \pascal) and transductive (\minesweeper) settings. Specifically, we consider the following variations: (i) using \ours only at the first layer and using the backbone MPNN for subsequent layers, representing the case where auxiliary connections are predetermined and fixed; (ii) connecting all nodes to the selected VNs ($\tilde{s}_{vz}^{(l)}\!=\!c\!\geq\!0$ in \S~\ref{subsec:w2vn}), (iii) using only node-level preference scores for node-VN connections ($\beta^{(l)}\!=\!0$ in \S~\ref{subsec:w2vn}); (iv) using only VN-level preference scores for node-VN connections ($\beta^{(l)}\!=\!1$ in \S~\ref{subsec:w2vn}); (v) independently establishing connections without accounting for the relative significance of relevance scores ($\alpha\!=\!0$ in \S~\ref{subsec:w2vn} and \S~\ref{subsec:vnvn}); (vi) learning VN representations instead of aggregating their neighboring nodes' representations ($\gamma^{(l)}=\mathbf{1}_{d_{l-1}}$ in \S~\ref{subsec:vnrep}); and (vii) interconnecting all selected VNs (\S~\ref{subsec:vnvn}). While the extent of performance degradation varies depending on the task type (graph-level vs. node-level) and evaluation setting (inductive vs. transductive), removing any component reduces performance, indicating that all components are essential to \ours. More explanations and analysis are provided in Appendix~\ref{app:abl}.

\begin{figure}[t]
\centering
\includegraphics[width=0.8\linewidth]{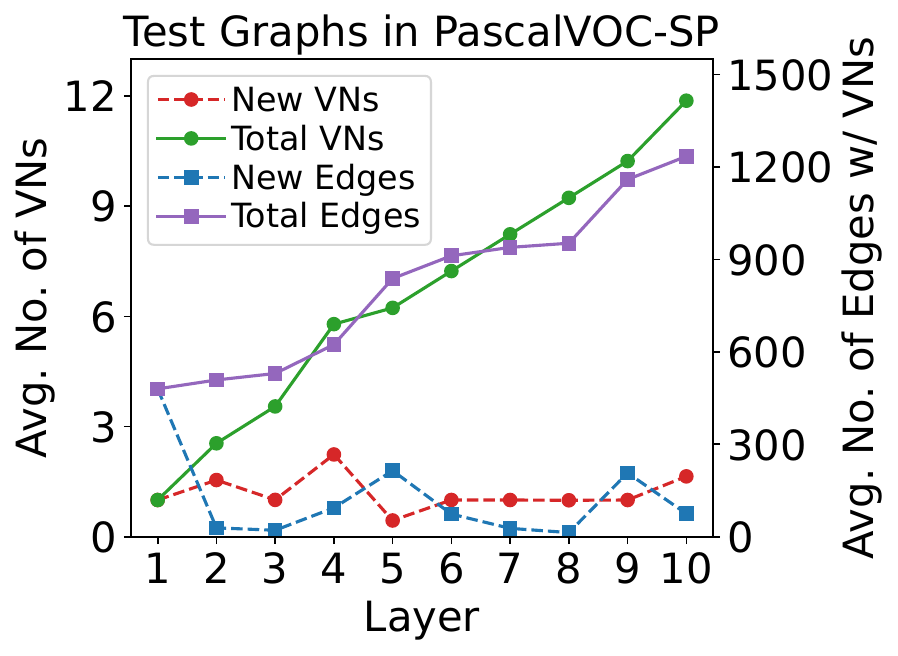}
\caption{Average number of introduced VNs and edges formed for each layer and the total number of VNs and introduced edges at each layer on the test graphs in \pascal.}
\label{fig:qual}
\end{figure}
\paragraph{Qualitative Analysis} 
 On test graphs in \pascal, we count the average number of VNs introduced and the average number of edges formed by \ours at each layer, shown in Figure~\ref{fig:qual}. Since \pascal consists of multiple graphs, we report the average number of new VNs and connections per layer across all graphs in the test set. We observe that both the number of new VNs and their connectivity patterns vary across layers. For instance, the fourth layer introduces an average of 2.24 VNs, whereas the fifth layer introduces only 0.44 VNs on average. While approximately one VN is introduced at the first, third, and sixth to ninth layers, the number of new edges on these layers varies substantially, ranging from 13.5 (layer 8) to 480.0 (layer 1) on average. The varying number of VNs and different connectivity patterns across layers indicate that \ours adaptively introduces VNs and flexibly forms connections.

%The first, third, and sixth to ninth layers each introduce approximately one VN per graph, but the number of new edges varies widely, ranging from 13.5 to 480.0.
%Also, while the first, third, and sixth to ninth layers introduce approximately one VN per graph, the number of new edges on these layers varies substantially, ranging from 13.5 to 480.0 on average per layer.

\begin{table}[t]
\centering
\caption{Runtime per epoch, peak memory usage, the average number of introduced VNs (Avg. VNs), and performance (F1-macro) of \ours with different values of $M$ on \pascal.}
\setlength{\tabcolsep}{1mm}
\begin{tabular}{ccccc}
    \toprule
     $M $& Time & Memory & Avg. VNs & F1-macro \\
    \midrule
     6 & 54 secs & 10.0 GB & 5.4 & 46.85\scalebox{0.6}{\color{gray}$\pm$0.92} \\
     12 & 60 secs & 12.3 GB & 9.5 & 47.37\scalebox{0.6}{\color{gray}$\pm$1.29} \\
     18 & 62 secs & 13.6 GB & 11.9 & 46.58\scalebox{0.6}{\color{gray}$\pm$0.41}\\
     24 & 64 secs & 14.0 GB & 11.9 & 48.28\scalebox{0.6}{\color{gray}$\pm$0.52}\\
     30 & 72 secs & 14.1 GB & 12.5 & 47.28\scalebox{0.6}{\color{gray}$\pm$0.88}\\
     36 & 75 secs & 14.5 GB & 14.3 & 47.32\scalebox{0.6}{\color{gray}$\pm$1.18}\\
    \bottomrule
\end{tabular}
\label{tb:M}
\end{table}

\paragraph{Varying $M$} Recall that $M$ is the upper bound of the number of VNs added by \ours, a hyperparameter chosen via grid search. To examine the impact of $M$ on both the computational cost and the performance, we report \ours's runtime, peak memory usage, and the average number of introduced VNs on \pascal for different values of $M$ in Table~\ref{tb:M}. We observe that the runtime, memory, and the average number of new VNs scale sublinearly with $M$. For instance, increasing $M$ from 6 to 36 raises the computation costs by only $1.5\times$. Such sublinear growth is attributed to \ours's score adjustment mechanisms proposed in \S~\ref{subsec:w2vn} and \S~\ref{subsec:vnvn}, which allow \ours to selectively add VNs and form edges rather than exhaustively add all candidate VNs and form all corresponding edges. Importantly, \ours outperforms the best baseline (45.85{\scalebox{0.6}{\color{gray}$\pm$0.87}}) across all tested values of $M$, demonstrating that it learns to leverage a useful subset of candidate VNs regardless of a specific choice of $M$.

%Table~\ref{tb:M} presents the runtime per epoch, peak memory usage, the average number of introduced VNs, and performance on \pascal for different values of the hyperparameter $M$. We observe that the runtime, memory usage, and the average number of introduced VNs increase sublinearly with $M$. For instance, employing 36 candidate VNs results in approximately 1.5 times higher runtime, peak memory usage, and the average number of introduced VNs compared to using 6 candidate VNs. This sublinear growth occurs because \ours selectively introduces candidate VNs through the score adjustment mechanisms proposed in \S~\ref{subsec:w2vn} and \S~\ref{subsec:vnvn}, rather than exhaustively introducing all candidate VNs. Consequently, \ours effectively regulates sparsity of the resulting graph structure for a given $M$.

\section{Conclusions and Future Work}
\label{sec:con}
We propose \ours, a novel MPNN framework that learns to determine \textit{when} and \textit{where} to introduce and connect VNs, allowing unconstrained and flexible connectivity between nodes and VNs. Since \ours makes no assumptions about the backbone architecture and relies solely on node representations to dynamically update message passing paths, it remains agnostic to the choice of backbone MPNNs. Experimental results on nine datasets, each evaluated with three different backbone MPNNs, show that \ours consistently improves the backbone MPNNs' performance and outperforms other state-of-the-art methods, underscoring the importance of adaptive message passing for MPNNs.

For future work, we aim to extend \ours into a graph coarsening framework. By interpreting the node-VN connections as cluster assignments, where VNs act as representative supernodes, we can leverage \ours to perform hierarchical graph pooling~\cite{megraph, vcr}. We expect this hierarchical abstraction to enhance the model's predictive performance across a diverse range of tasks by providing different levels of granularity for the given graph structure.

%As a future direction, we plan to develop a backbone MPNN explicitly tailored for \ours, operating under the assumption that auxiliary paths may be provided. Additionally, we aim to extend \ours to Graph Transformers~\cite{graphormer, gps, egt}, another family of GNNs capable of utilizing virtual nodes~\cite{geaet, exphormer}, by further improving the scalability of \ours. We also believe \ours can work with and complement asynchronous message passing methods such as \cognn~\cite{cognn} or \amp~\cite{amp}, which dynamically perform message passing on the given graph structure.

%As \ours relies on a backbone MPNN to update representations, its performance depends on its backbone. While \ours improves the backbone's performance by adding message passing paths, it adds computational overhead. Future work includes applying \ours to graph transformers, another family of GNNs that can utilize VNs, by further improving its scalability. Also, \ours can work with and complement asynchronous message passing methods such as \cognn~\cite{cognn} or \amp~\cite{amp}.

%%
%% The acknowledgments section is defined using the "acks" environment
%% (and NOT an unnumbered section). This ensures the proper
%% identification of the section in the article metadata, and the
%% consistent spelling of the heading.
\begin{acks}
This work was partly supported by the \grantsponsor{nrf}{National Research Foundation of Korea (NRF) grant funded by the Korean government (MSIT)}{https://www.nrf.re.kr/} (98\% from~\grantnum{nrf}{RS-2025-00559066}) and \grantsponsor{ai,lg}{Institute of Information \& communications Technology Planning \& Evaluation(IITP) grant funded by the Korea government(MSIT)}{} (1\% from~\grantnum{ai}{RS-2019-II190075}, Artificial Intelligence Graduate School Support Program(KAIST), 1\% from~\grantnum{lg}{RS-2025-25442149}, LG AI STAR Talent Development Program for Leading Large-Scale Generative AI Models in the Physical AI Domain).
\end{acks}

%%
%% The next two lines define the bibliography style to be used, and
%% the bibliography file.
\bibliographystyle{ACM-Reference-Format}
\bibliography{MAVN}

%%
%% If your work has an appendix, this is the place to put it.
\appendix

\section{Proof of Theorem 1}
\label{app:prf}
In Section~\ref{sec:propours}, we present Theorem~\ref{thm:flexours} to demonstrate that, for any set of additional message passing paths involving VNs, the parameters of a single-layer \ours can be configured to construct all paths in the set simultaneously, highlighting its flexibility in generating message passing paths. The proof of Theorem~\ref{thm:flexours} is based on the universal approximation theorem for MLPs~\citep{univ}.
\begin{thm:flexours}
Given a graph $G=(\sV,\sE)$ with $K$ sets of nodes $\sV_1, \sV_2, \cdots, \sV_K \subseteq \sV$, if node representations are uniquely distinguishable (i.e., $\vx_u=\vx_v\iff u=v, \forall u,v\in \sV$), there exists a parameter configuration of a single-layer \ours that introduces $K$ virtual nodes $z_1, z_2, \cdots, z_K$ into $G$, where each $z_i$ is connected to a node $v\in\sV$ if and only if $v\in\sV_i$.
\end{thm:flexours}
\begin{proof}
Let us set $M=K, d_\text{dot}=M$, and $\vk_{z_i}=\texttt{OneHot}(M, i)$, where $\texttt{OneHot}(M, i)$ denotes a vector of size $M$ with the $i$-th entry equal to 1 and all the other entries equal to 0. We use $\sX_G\subset\RR^{d}$ to denote the set of node representations, where $d$ is the dimension of node representations. Since the number of nodes is finite, $\sX_G$ is a compact set. The function $f$ maps a node representation $\vx_v$ to a binary vector of length $M$, where the $i$-th entry is 1 if node $v$ belongs to $\sV_i$ and 0 otherwise. Formally, $f$ is defined as follows:
\begin{equation}
f(\vx_v) = \sum_{i=1}^K \boldone[v\in\sV_i]\cdot\texttt{OneHot}(M, i),
\end{equation}
where $\boldone[\text{condition}]$ is the indicator function that returns 1 if the condition holds and 0 otherwise. Let $g:\RR^{d} \to \RR^{M}$ be a continuous function such that $\forall v\in \sV$, $g(\vx_v)=(2\epsilon+t\sqrt{d_\text{dot}})f(\vx_v) - \epsilon$, where $\epsilon>0$ is a positive real number and $t = \alpha\cdot\text{log}\big((M|\sV|-1)\exp(2\epsilon/\sqrt{d_\text{dot}})+1\big)+\text{log}(|\sV|)$.
Then, by universal approximation theorem for MLPs~\citep{univ}, there exists an MLP $\phi$ that approximates $g$ on the compact set $\sX_G$ with approximation error bounded by $\epsilon$. That is, for each $v\in \sV$, the following holds:
\begin{equation}
\begin{aligned}
|\phi(\vx_v) \cdot \texttt{OneHot}(M,i)-(2\epsilon+t\sqrt{d_\text{dot}}-\epsilon)| < \epsilon \quad& v \in \sV_i, \\ 
|\phi(\vx_v) \cdot \texttt{OneHot}(M,i)-(-\epsilon)| < \epsilon \quad& v \notin \sV_i.
\end{aligned}
\end{equation}
Let us use $\phi$ as the MLP in Eq.~\ref{eq:rel_score}. Since $\vk_{z_i}=\texttt{OneHot}(M,i)$, the relevance score $s_{vz_i}$ between a node $v$ and a VN $z_i$ lies in:
\begin{equation}
\begin{aligned}
t<&s_{vz_i} = \text{MLP}(\vx_v)\cdot \vk_{z_i}/\sqrt{d_\text{dot}}<2\epsilon/\sqrt{d_\text{dot}}+t &v \in \sV_i,\\
-2\epsilon<&s_{vz_i} = \text{MLP}(\vx_v)\cdot \vk_{z_i}/\sqrt{d_\text{dot}}<0 &v \notin \sV_i.
\end{aligned}
\end{equation}
Since $s_{vz_i} < 0$ when $v\notin\sV_i$, and \texttt{logsoftmax} values are non-positive by definition, it is clear that $z_i$ will never be connected to $v$ if $v\notin\sV_i$. For the same reason, $z_i$ will not be selected if $\sV_i=\emptyset$.

Therefore, to complete the proof, it remains to show that 1) $z_i$ is selected if $\sV_i\neq\emptyset$, and 2) if $v\in\sV_i$, then $z_i$ is connected to $v$. 

When $v\in\sV_i$, the lower bound of $\bar{s}_{vz_i}$ is computed by:
\begin{align*}
    \bar{s}_{vz_i}&=s_{vz_i}+\alpha\cdot\text{logsoftmax}(\sS)[s_{vz_i}]\\
    &=s_{vz_i}+\alpha\text{log}\left(\frac{\text{exp}(s_{vz_i})}{\sum_{j=1}^M\sum_{u\in\sV}\text{exp}(s_{uz_j})}\right)\\
    &>s_{vz_i}+\alpha\text{log}\left(\frac{\text{exp}(t)}{\sum_{j=1}^M\sum_{u\in\sV}\text{exp}(s_{uz_j})-\text{exp}(s_{vz_i})+\text{exp}(t)}\right)\\
    &>t+\alpha t-\alpha\text{log}\left(\sum_{j=1}^M\sum_{u\in\sV}\text{exp}(s_{uz_j})-\text{exp}(s_{vz_i})+\text{exp}(t)\right)\\
    &>(1+\alpha)t-\alpha\text{log}\left((M|\sV|-1)\text{exp}\left(t+2\epsilon/\sqrt{d_\text{dot}}\right)+\text{exp}(t)\right)\\
    &=(1+\alpha)t-\alpha\left(t+\text{log}\left((M|\sV|-1)\text{exp}\left(2\epsilon/\sqrt{d_\text{dot}}\right)+1\right)\right)\\
    &=(1+\alpha)t-\alpha t-(t-\text{log}(|\sV|)) =\text{log}(|\sV|),
\end{align*}
where $\sS=\{s_{v'z'}|v'\in\sV, z'\in\{z_1, z_2, \cdots, z_K\}\}$. The first inequality holds because for any positive real numbers $a,b,c$, if $b>c$, then $b/(a+b)>c/(a+c)$.

Also, the score $s_{z_i}$ of VN $z_i$ is bounded by:
\begin{equation}
\begin{aligned}
    s_{z_i}&=\text{log}(\sum_{u\in\sV}\text{exp}(\bar{s}_{uz_i})/|\sV|) \geq \text{log}(\text{exp}(\bar{s}_{vz_i})/|\sV|) \\
    &> \text{log}(|\sV|/|\sV|)=\text{log}(1)=0.
\end{aligned}
\end{equation}
$s_{z_i}>0$ implies that $\texttt{sigmoid}(s_{z_i})>0.5$, and thus, the VN $z_i$ is selected. Moreover, since $\bar{s}_{vz_i} > \text{log}(|\sV|) > 0$, a connection between $z_i$ and $v$ is formed. Note that $\bar{s}_{vz_i} > 0$ is a sufficient condition for this connection, since $\bar{s}_{vz_i} \leq \tilde{s}_{vz_i|z_i}$ and $ \bar{s}_{vz_i} \leq \tilde{s}_{vz_i|v}$ when $\sS_{:z_i}, \sS_{v:}\subseteq \sS$.

%given that $\sS_{:z_i}, \sS_{v:}$ are subsets of $\sS$.

Therefore, there exists a parameter configuration of a single-layer \ours that establishes a connection between a node $v$ and a VN $z_i$ if and only if $v\in\sV_i$.
\end{proof}

\section{Details of \ours}
\label{app:detours}
Our codes are provided in \url{https://github.com/bdi-lab/MAVN}.

\subsection{Implementation Details of \textbf{\ours}}
\label{app:impl}
In our implementation, we apply separate normalizations to the backbone MPNN, the input of $\text{MLP}^{(l)}$, and $\vk_z^{(l)}$. For graph-level tasks, we additionally introduce a global VN into $G^{(0)}$ that serves as the representation node for the entire graph. 

For backbone MPNNs used on the LRGB datasets~\citep{lrgb}, we follow the implementations from~\cite{lrgbexp}. For the heterophilic graph datasets~\citep{hetegraph}, we employ two different backbone implementations: one from~\cite{hetegraph} and another from~\cite{hetegraphexp}. The choice of implementation is based on validation performance. For more details about the implementations, please refer to our code.

\subsection{Experimental Details of \textbf{\ours}}
\label{app:exp}
All experiments are conducted using PyTorch 2.0.1 with cudatoolkit 11.7 and python 3.9.19 on Ubuntu 18.04. Our computing infrastructure has 512GB of memory, with two Intel(R) Xeon(R) Gold 6330 CPU @ 2.00GHz. We used LRGB~\cite{lrgb} and heterophilic graph datasets~\cite{hetegraph}, as they are well-established benchmarks in the literature. Hyperparameters are provided along with the code.

\section{Dataset Details}
\label{app:data}

Table~\ref{tb:stat:syn} presents the statistic for two multi-graph synthetic datasets, \synunder~\citep{prmpnn} (\treelc) and \synsquash~\citep{ovsq} (\treenb). “dep." is the tree depth. In both datasets, the task is to predict the label of the root node. These datasets are designed to evaluate a model's ability to mitigate under-reaching and over-squashing, respectively. Edge features are not provided in these datasets.

Table~\ref{tb:lrgb} shows the statistic for the four LRGB datasets \citep{lrgb}. These are multi-graph datasets with disjoint training, validation, and test graphs; the tasks are performed in an inductive setting. \pepfunc and \pepstruct are for graph classification and graph regression tasks, respectively, on peptides molecular graphs. \pascal and \coco represent images as graphs, where each node corresponds to a superpixel, and the task is node classification. 

\begin{table}
\centering
\caption{Dataset statistic for two tree synthetic datasets with depths ranging from 4 to 6. “Acc." denotes the accuracy.}
\label{tb:stat:syn}
\setlength{\tabcolsep}{1mm}
\begin{tabular}{cccccccc}
    \toprule
     & dep. &\#graphs & Avg. $|\sV|$ & Avg. $|\sE|$ & $d'$ & $d''$ & Metric\\
    \midrule
    \multirow{3}{*}{\treelc} & 4 & 14,000 & 31 & 30 & 2 & N/A &Acc.\\
    & 5 & 11,997 & 63 & 62 & 2 & N/A & Acc.\\
    & 6 & 12,597 & 127 & 126 & 2 & N/A &Acc.\\
    \midrule
    \multirow{3}{*}{\treenb} & 4 & 16,000 & 31 & 30 & 2 & N/A &Acc.\\
    & 5 & 32,000 & 63 & 62 & 2 & N/A &Acc.\\
    & 6 & 32,000 & 127 & 126 & 2 & N/A &Acc.\\
    \bottomrule
\end{tabular}
\end{table}

\begin{table}
\centering
\caption{Dataset statistic for the four LRGB datasets.}
\setlength{\tabcolsep}{1mm}
\begin{tabular}{ccccccc}
    \toprule
     &\#graphs & Avg. $|\sV|$ & Avg. $|\sE|$ & $d'$ & $d''$ & Metric\\
    \midrule
    \pepfunc & 15,535 & 150.9 & 153.7 & 9 & 3 & AP\\
    \pepstruct & 15,535 & 150.9 & 153.7 & 9 & 3 & MAE\\
    \pascal & 11,355 & 479.4 & 1355.2 & 14 & 2 & F1-macro\\
    \coco & 123,286 & 476.9 & 1346.8 & 14 & 2 & F1-macro\\
    \bottomrule
\end{tabular}
\label{tb:lrgb}
\end{table}

\begin{table}
\centering
\caption{Statistic for the five Heterophilic Graph Datasets.}
\setlength{\tabcolsep}{1mm}
\begin{tabular}{ccccccc}
    \toprule
     & $|\sV|$ & $|\sE|$ & $d'$ & $d''$ & Metric\\
    \midrule
    \empire & 22,662 & 32,927 & 300 & N/A & Accuracy\\
    \amazon & 24,492 & 93,050 & 300 & N/A & Accuracy\\
    \minesweeper & 10,000 & 39,402 & 7 & N/A & AUCROC\\
    \tolokers & 11,758 & 519,000 & 10 & N/A & AUCROC\\
    \questions & 48,921 & 153,540 & 301 & N/A & AUCROC\\
    \bottomrule
\end{tabular}
\label{tb:hete}
\end{table}

Table~\ref{tb:hete} shows the dataset statistic for the five Heterophilic Graph Datasets~\citep{hetegraph}. Each dataset consists of a single graph, and the task is transductive node classification on the given graph. \empire is a word co-occurrence graph derived from the English Wikipedia article on the Roman Empire. \amazon represents a product co-purchasing network from Amazon. \minesweeper is a synthetic grid graph simulating the Minesweeper game. \tolokers is a user interaction graph from the Toloka crowdsourcing platform, and \questions is an interaction graph from the question-answering website Yandex Q. Edge features are not provided in these datasets.

\section{Additional Experiments on \ours}
\label{app:prop}

\begin{figure}
    \centering
    \includegraphics[width=0.9\linewidth]{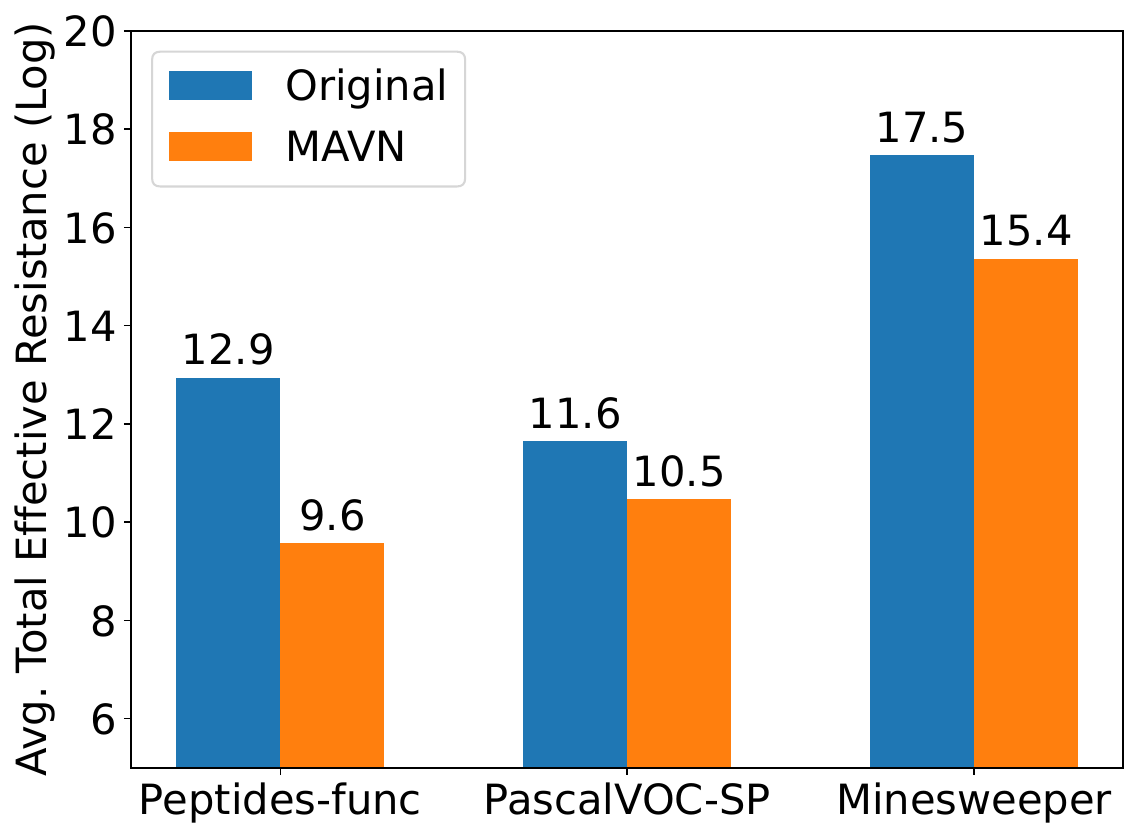}
    \caption{Comparison of log-transformed average total effective resistance between the original graph and the graph produced by \ours. The consistent reduction in total effective resistance suggests that \ours alleviates over-squashing.}
    \label{fig:tot_er}
\end{figure}

\begin{figure}
\centering
\includegraphics[width=0.9\linewidth]{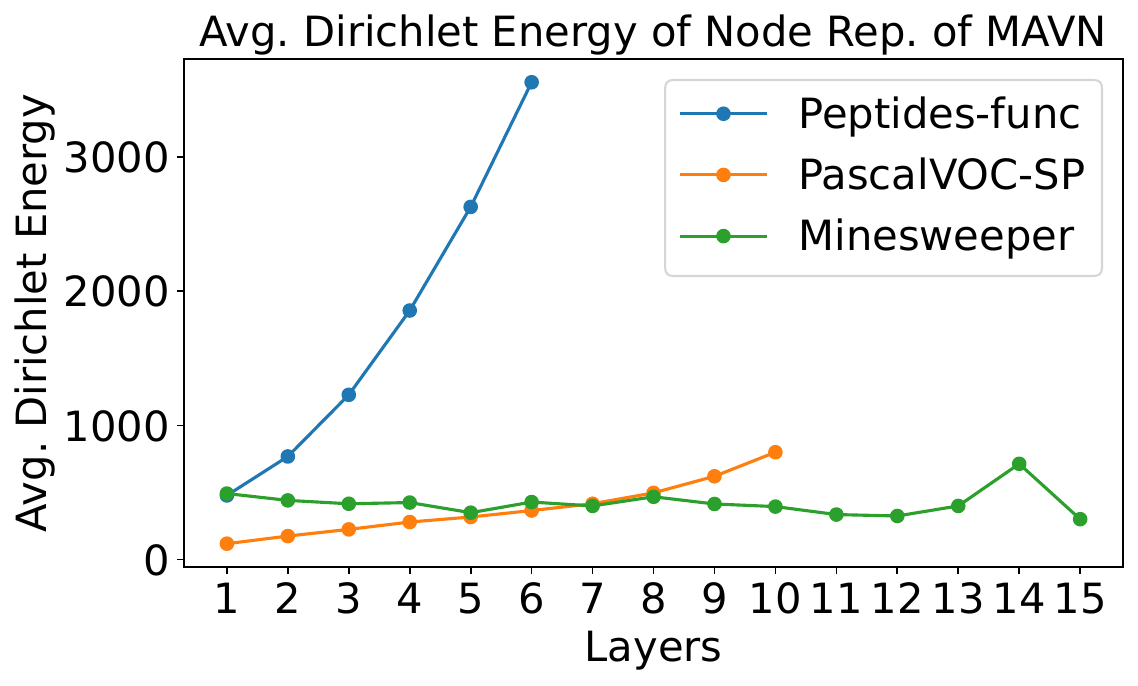}
\caption{Average Dirichlet energy of node representations computed by \ours at each layer. Dirichlet energy does not decay exponentially with increasing depth, indicating that \ours does not lead to over-smoothing.}
\label{fig:de}
\end{figure}

\paragraph{Over-squashing} Total effective resistance, also known as effective graph resistance~\citep{egr}, is defined as the sum of effective resistances between all pairs of nodes in a graph. It serves as an upper bound on the extent of over-squashing~\citep{ovsqer}. Thus, a lower total effective resistance implies a reduced degree of over-squashing. Figure~\ref{fig:tot_er} compares the log-transformed total effective resistance between the original graph $G^{(0)}$ and the final graph $G^{(L)}$ of \ours on three real-world datasets: \pepfunc, \pascal, and \minesweeper. For the multi-graph datasets, \pepfunc and \pascal, the total effective resistance values are averaged across the test graphs. For \minesweeper, a single-graph dataset, the total effective resistance values are averaged over the final graphs produced by \ours trained on each of ten splits. Across all datasets, \ours consistently reduces the total effective resistance, indicating that \ours effectively alleviates over-squashing in real-world datasets.

\paragraph{Over-smoothing} Over-smoothing is a phenomenon in which node representations become overly similar~\citep{ovsm}. It can be characterized as a layer-wise exponential decay of Dirichlet energy~\citep{ovsmsurv}, where Dirichlet energy quantifies differences between node representations. Thus, if over-smoothing occurs, an exponential decrease in Dirichlet energy should be observed with increasing layers. Figure~\ref{fig:de} illustrates the Dirichlet energy of node representations computed by \ours at each layer on \pepfunc, \pascal, and \minesweeper, using the definition of Dirichlet energy from Eq.~2 in~\cite{ovsmsurv}. For the multi-graph datasets, \pepfunc and \pascal, we report the Dirichlet energy averaged over the test graphs. For \minesweeper, a single-graph dataset, we report the average Dirichlet energy of node representations of \ours trained on ten different splits. On all datasets, over-smoothing does not occur, as the Dirichlet energy does not exhibit a consistent decrease. Specifically, on \pepfunc and \pascal, the Dirichlet energy consistently increases with depth. On \minesweeper, the trend is non-monotonic, with both increases and decreases observed at different layers. These results demonstrate that \ours does not introduce over-smoothing in real-world datasets.

\end{document}